\documentclass[11pt, a4paper, logo, copyright]{pluralisresearchiclr}

\usepackage{amsmath,amsfonts,bm}

\def\eqref#1{equation~\ref{#1}}

\def\1{\bm{1}}

\DeclareMathAlphabet{\mathsfit}{\encodingdefault}{\sfdefault}{m}{sl}
\SetMathAlphabet{\mathsfit}{bold}{\encodingdefault}{\sfdefault}{bx}{n}

\pdftrailerid{redacted}

\makeatletter
\renewcommand\bibentry[1]{\nocite{#1}{\frenchspacing\@nameuse{BR@r@#1\@extra@b@citeb}}}
\makeatother

\usepackage{dsfont}
\usepackage{algorithm}
\usepackage{algpseudocode}
\usepackage{adjustbox}
\usepackage{multirow}
\usepackage{nicefrac}
\usepackage{float}
\usepackage{booktabs}
\usepackage{amsmath, amssymb, amsfonts, amsthm}
\usepackage{subcaption}
\usepackage{graphicx}
\usepackage{url}
\usepackage{pifont}
\usepackage{enumitem}
\usepackage[capitalize,noabbrev]{cleveref}

\graphicspath{{figures/}}
 
\theoremstyle{plain}
\newtheorem{theorem}{Theorem}[section]

\newtheorem{lemma}[theorem]{Lemma}

\theoremstyle{definition}

\theoremstyle{remark}
\newtheorem{remark}[theorem]{Remark}

\newcolumntype{R}[2]{%
    >{\adjustbox{angle=#1,lap=\width-(#2)}\bgroup}%
    l%
    <{\egroup}%
}

\newenvironment{tight_itemize}{
\begin{itemize}[leftmargin=10pt]
  \setlength{\topsep}{0pt}
  \setlength{\itemsep}{3pt}
  \setlength{\parskip}{0pt}
  \setlength{\parsep}{0pt}
}{\end{itemize}}

\newcommand{\reals}{\mathbb{R}}
\newcommand{\Ell}{\mathcal{L}}

\providecommand{\norm}[1]{\left\lVert#1\right\rVert}
\newcommand{\sqn}[1]{{\left\lVert#1\right\rVert}^2}
\providecommand\ev[1]{\left \langle #1 \right \rangle}
\newcommand\ec[2][]{\ensuremath{\mathbb{E}_{#1} \left[#2\right]}}
\newcommand\ecn[2][]{\ec[#1]{\norm{#2}^2}}

\newcommand\eci[2][]{\ensuremath{\mathbb{E} \left[#2 \mid \mathcal{F}_{#1}\right]}}
\newcommand\ecin[2][]{\eci[#1]{\norm{#2}^2}}
\newcommand{\eqdef}{\overset{\text{def}}{=}}

\title{Factored Gossip DiLoCo: Reducing Blocking Communication in DiLoCo}

\correspondingauthor{chamin@pluralis.ai}
\conferenceinfo{Published at ICML 2026.}

\keywords{Distributed Training, Federated Learning, Gossip Algorithms, Language Models, DiLoCo}
\paperurl{}
\reportnumber{}

\renewcommand{\today}{}

\author[1]{Chamin Hewa Koneputugodage}
\author[1]{Thalaiyasingam Ajanthan}
\author[1]{Sameera Ramasinghe}
\author[1]{Hadi Mohaghegh Dolatabadi}
\author[1]{Shamane Siriwardhana}
\author[1]{Gil Avraham}
\author[1]{Violetta Shevchenko}
\author[1]{Karol Pajak}
\author[1]{James Snewin}
\author[1]{Alexander Long}

\affil[1]{Pluralis Research}

\begin{abstract}
  To make large-scale distributed training practical outside high-bandwidth datacenters, we must reduce blocking, high-volume synchronization. While DiLoCo communicates infrequently, its outer synchronization remains bandwidth-heavy and brittle to stragglers and transient failures.
  We relax exact synchronization to \textit{approximate synchronization} via mixing/gossip, which degrades gracefully under delays and communication failures. This allows us to factorize DiLoCo synchronization into a \textit{non-blocking} mixing step that overlaps computation with no staleness, and a \textit{blocking} mixing step that tightens worker agreement, yielding a tunable trade-off between compute utilization and optimization stability.
  On up to billion-parameter language models in low-bandwidth settings, our framework substantially improves compute utilization compared to DiLoCo, with training progress ranging from comparable to closely matching it, and is more robust to failures.
\end{abstract}

\begin{document}
\maketitle

\section{Introduction}
\label{sec:intro}

\begin{figure}[t]
    \centering
    \includegraphics[width=0.7\linewidth,trim={30 22 220 15},clip]{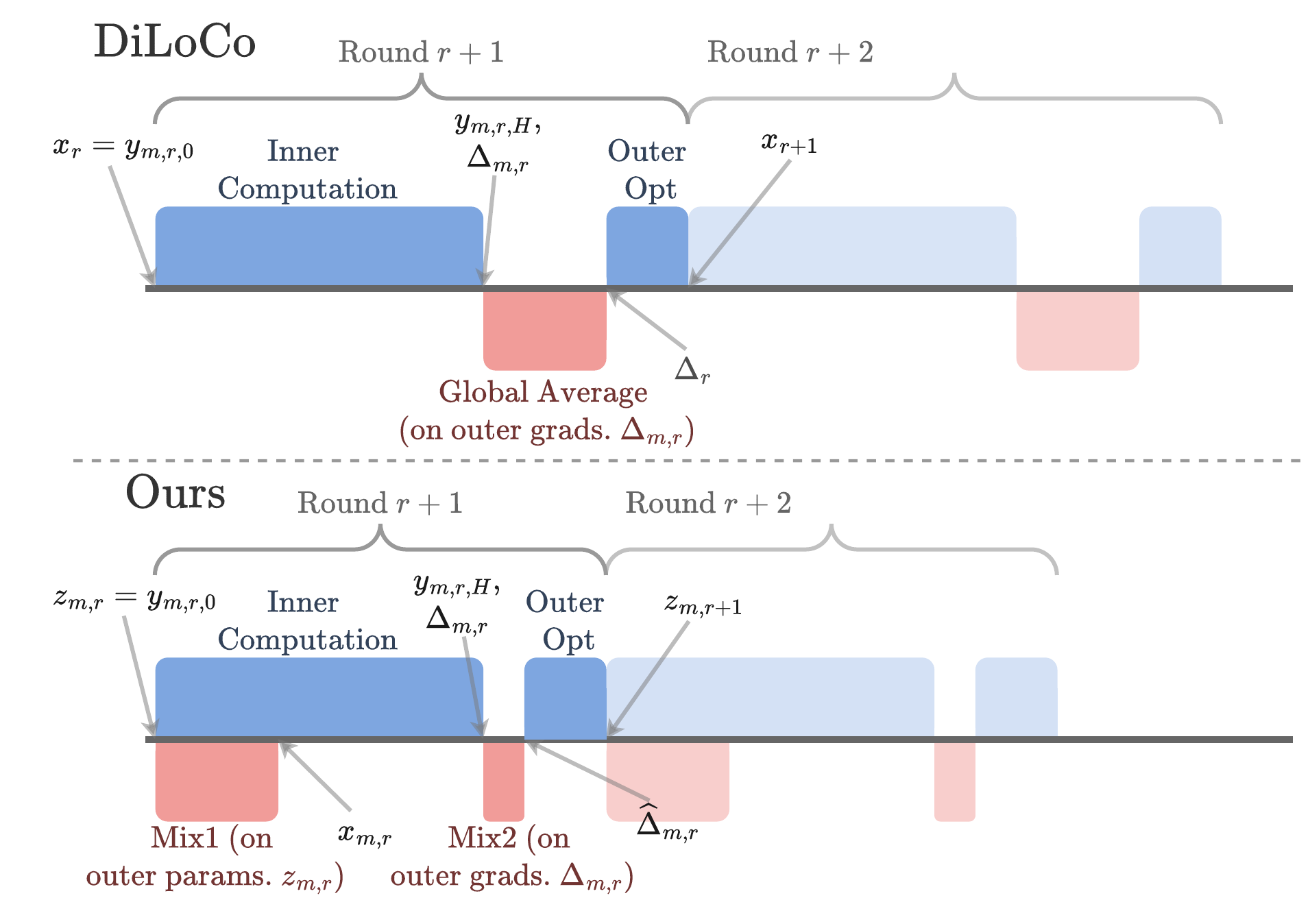}
    \caption{
    \textbf{Comparison of DiLoCo and our approach.} We replace exact per-round synchronization with approximate synchronization via two operations: Mix1 mixes previous outer-step parameters (non-blocking, overlaps communication without temporal staleness) and Mix2 mixes the latest outer gradient (blocking). This enables us to use overlapped global averaging for Mix1 and a minimal amount of mixing that is sufficient for stability for Mix2. \vspace{-4pt}
    }
    \label{fig:teaser}
\end{figure}

Scaling large language model (LLM) training continues to unlock major gains in capability~\cite{llama3,deepseekv3,gpt3}, but centralized clusters face hard communication, energy, and infrastructure bottlenecks. This motivates distributed training, as well as decentralized training across geographically and administratively separated collaborators~\cite{hivemind,ryabinin2023swarm,avraham2025node0}. To be practical, such settings must mitigate communication bottlenecks, especially in low-bandwidth, consumer-grade settings. 

One popular approach is DiLoCo~\cite{douillard2023diloco}, which reduces communication frequency by running many local optimization steps between synchronization steps. However, its outer synchronization remains (i) high-volume and blocking (it stalls computation), and (ii) brittle: bandwidth-efficient global collectives (e.g., ring all-reduce \cite{allreducevariants}) are global barriers, dominated by the slowest link, and can abort on a single link failure, requiring retries \cite{torchft2024}. Because DiLoCo relies on successful synchronizations to reset disagreement, such failures cannot be ignored without risking instability.

In this paper, we address these problems with \textit{approximate synchronization}. DiLoCo already tolerates some disagreement between workers during local steps, and its global synchronization resets this to zero. We relax exact global synchronization to non-global synchronization that keeps disagreement low. Since synchronization is fundamentally an averaging operation, mixing/gossip is the natural relaxation~\cite{lian2017D-PSGD,Koloskova2020UnifiedTheoryDSGD}. Furthermore, it is less sensitive to delays and degrades gracefully under transient failures, where missed exchanges simply yield weaker mixing rather than aborting the round.

Tolerating approximate synchronization also lets us factorize outer synchronization into two components: a \textit{non-blocking} mixing step (Mix1) that overlaps with computation without temporal staleness, and a blocking mixing step (Mix2) that more strongly enforces consensus for stability. We can choose the mixing operator for each step; in particular, we use overlapped global averaging for Mix1 to bound disagreement with minimal compute impact, and use blocking non-global mixing in Mix2 to tighten this bound and improve stability, at the cost of lower compute utilization.

Our experiments show large compute-utilization gains with convergence ranging from comparable to closely matching DiLoCo, improved robustness to communication failures, and an empirical study of how disagreement bounds relate to optimization stability and the trade-off between compute utilization and stability.

\section{Related Work}
\label{sec:related-work}

\textbf{Synchronous Data Parallel.}
Synchronous data parallel (sync-DP) averages gradients globally before each update \cite{Sergeev2018Horovod,Goyal2017AccurateLargeMinibatchSGD}. This is equivalent to single-worker optimization with a larger effective batch, but requires expensive global communication.

\textbf{Local SGD and Periodic Model Synchronization.}
Methods that reduce synchronization \textit{frequency} by taking multiple local steps have a long lineage: early parallel-SGD work periodically averaged worker parameters \cite{zinkevich2010parallelizedSGD,mcdonald2010iterativeparametermixing}, DOWNPOUR added asynchronous parameter-server SGD \cite{Dean2012LargeScaleDistributed}, Elastic Averaging SGD (EASGD) \cite{zhang2015deeplearningelasticaveraging} replaced direct averaging with an elastic pull toward a slowly-moving center variable, and blockwise model-update filtering (BMUF) \cite{chen2016bmuf} added a global momentum term over averaged updates that presaged the outer-optimizer view. Federated Learning (FL) \cite{li2021survey,flsurvey} later refined these ideas: Local SGD \cite{Stich2018LocalSGD} and FedAvg \cite{mcmahan2017fedavg} use periodic global parameter averaging, while FedOpt \cite{reddi2020fedopt} treats the multi-step parameter change as an \textit{outer gradient} for a global outer optimizer. DiLoCo \cite{douillard2023diloco} adapts FedOpt to LLM training with AdamW \cite{Loshchilov2017AdamW} as the inner optimizer and Nesterov accelerated SGD \cite{nesterov1983method,kallusky2025snoostepknesterovouter} as the outer optimizer, enabling many more inner steps between global synchronizations.

\textbf{Decentralized SGD and Gossip-based Optimization.}
Gossip-based methods replace global synchronization with sparse peer-to-peer communication, improving consensus via local mixing over a graph rather than enforcing exact global agreement each round \cite{kempe2003pushsumgossip,Blot2016GoSGD,lian2017D-PSGD,assran2019stochasticgradientpushdistributed,Koloskova2020UnifiedTheoryDSGD}. Several works combine gossip with periodic averaging \cite{Wang2021CooperativeSGD,Koloskova2020UnifiedTheoryDSGD,Chen2021Gossip-PGA,guo2022hybridLocalSGD}, but target the FL objective rather than reducing blocking communication. The closest to our approach is NoLoCo \cite{Kolehmainen2025NoLoCo}, which has a similar factorized structure: they apply Mix2 with pairwise gossip, use a damped correction in the outer update that incorporates Mix1 with pairwise gossip, and using pipeline randomization to reduce worker divergence. Unlike our work, they do not study varying Mix1/Mix2 or explicitly optimize for reduced blocking and higher compute utilization.

\textbf{Overlapping Communication and Computation.}
Overlapping communication and computation masks communication overhead and improves utilization, especially for high-latency cross-site links. To preserve semantics, communication is launched as early as possible but awaited at required synchronization points (e.g., gradient bucketing in sync-DP, or overlapping pipeline-parallel transfers with compute). In practice, these synchronization points limit overlap. Asynchronous methods instead increase overlap by tolerating temporal staleness, e.g., parameter-server SGD \cite{recht2011hogwild,agarwal2011distributed,Dean2012LargeScaleDistributed,Paine2013GPUAsync,zhang2013asgd,Feyzmahdavian2015AsyncMiniBatchAlg,mishchenko2206asynchronous}, asynchronous gossip \cite{Lian2017AsynchronousDecentralizedPSGD}, and asynchronous pipeline scheduling \cite{huang2019gpipe,narayanan2019pipedream,pipedream-2bw,ajanthan2025asyncpp}, often with delay correction \cite{Zheng2016AsynchronousDelayComp,ajanthan2025asyncpp}.

\textbf{Overlapping Communication and Computation with DiLoCo.}
Asynchrony has also been explored for DiLoCo-style methods. \citet{liu2024asynchronous} push outer gradients to a server and mitigate outer-momentum staleness via less frequent momentum updates and fewer local steps for slow workers. \citet{ajanthan2025momentum} improve upon this by adding look-ahead delay correction  that extrapolates in the negative momentum direction. Streaming DiLoCo~\cite{douillard2025streaming} overlaps outer-gradient averaging with a few local steps, then merges the resulting stale parameters with the latest via a weighted average. \citet{kale2025eager} increases overlap to a full outer step and reduces staleness by updating the averaged outer gradient with each worker's latest contribution. To our knowledge, our approach is the first to overlap communication and computation in a DiLoCo-based setting without introducing temporal staleness.

In this paper, we focus on \textit{exact communication} (no quantization or sparsification) to isolate the effects of topology and synchronization structure. Compression is largely orthogonal and could be added on top. We also do not study staleness-based/asynchronous update schemes, which target a different axis by tolerating stale information rather than restructuring synchronization.

\section{Preliminaries}
\label{sec:method:prelim}
\textbf{DiLoCo.}

Following \citet{khaled2025understandingouteroptimizers}, let $x_r\in\mathbb{R}^d$ be the globally synchronized outer parameters at round $r$, and $y_{m,r,h}$ the inner parameters on worker $m\in\{1,...,M\}$ after $h\in\{0,...,H\}$ local steps, using stochastic input batch $b_{m,r,h}$. DiLoCo updates on each worker $m$ as
\begin{align}
  \vspace{-1mm}
  &y_{m, r, 0} = x_r \\
  &\text{for } h=0,1, \ldots, H-1 \text { in sequence: } \nonumber\\
    &\quad g_{m, r, h} = \nabla_{y_{m, r, h}} \Ell(y_{m, r, h}, b_{m, r, h})\\
    &\quad y_{m, r, h+1} = \text{InnerOpt}\left(y_{m, r, h}, g_{m, r, h}\right)\\
  &\Delta_{m,r} = y_{m, r, H} - y_{m, r, 0} \\
  &\Delta_{r} = \frac{1}{M} \sum\nolimits_{m=1}^M \Delta_{m,r}  \\
  &x_{r+1} = \text{OuterOpt}\left(x_r, -\Delta_{r} \right). \label{eq:vanilla-diloco-last-line}
\end{align}

\textbf{Gossip-based Mixing.}
Given values $v_{m,r}$ on each worker $m$ at round $r$ (e.g., parameters or gradients), gossip-based mixing has each worker compute
\begin{align}
v_{m,r+1} = \text{Mix}_{W_r}(v_{m,r}) := \sum\nolimits_{n=1}^M w_{m,n,r} v_{n,r}
\end{align}
where $W_r\in\reals^{M\times M}$ has entries $w_{m,n,r}$ and is symmetric and doubly stochastic. When $W_r$ is implicit, we write $\text{Mix}_{r}$. Global averaging corresponds to $W_r=W=\frac{1}{M}\mathbf{1}_M\mathbf{1}_M^T$, while local/gossip averaging uses sparse $W_r$. Let $X_r=\left[ x_{1,r},...,x_{M,r}\right]\in\reals^{d\times M}$ and $\overline{X}_r = \left[ \overline{x}_r,...,\overline{x}_r\right] = X_r \frac{1}{M}\mathbf{1}_M \mathbf{1}_M^\top$, where $\overline{x}_r=\frac{1}{M}\sum_{m=1}^M x_{m,r}$. After one mixing step the stacked values are $X_rW$, ideally equal to $\overline{X}_r$.

Mixing quality is summarized by the contraction factor $0\leq \rho \leq 1$, the reduction in squared distance to the global average. For mixing matrices $W\sim \mathcal{W}$, define the expected contraction factor $\overline{\rho}$ by
\begin{align}
\ec[W]{\|X_rW-\overline{X}\|_F^2} \leq \overline{\rho} \|X_r-\overline{X}_r\|_F^2.
\label{eq:contraction-factor}
\end{align}
Thus, global averaging corresponds to $\rho=0$, and no averaging corresponds to $\rho=1$. Apart from these two, we also use synchronous pairwise random gossip: each communication round samples a random matching (set of disjoint pairs), and within each pair workers average their values. This has expected contraction factor  $\overline{\rho}_\text{pair}=\frac{M-2}{2(M-1)}\approx\frac{1}{2}$ \cite{boyd2006randomizedGossipAlgorithms}, so the high-probability upper bound on the contraction factor is likely to satisfy $\frac{1}{2} < \rho^{\uparrow}_\text{pair} < 1$.

\textbf{Gossip-based Optimization.}
The classic way to do gossip averaging is to mix the parameters after each optimizer step \cite{Jin2016Gossip,Blot2016GoSGD,Koloskova2020UnifiedTheoryDSGD}:
\begin{align}
  \vspace{-1mm}
  g_{m,r} &= \nabla_{x_{m, r}} \Ell(x_{m, r}, b_{m,r})\\
  z_{m,r+1} 
    &= \text{Opt}\left(x_{m,r}, g_{m,r} \right)\\
  x_{m,r+1} &= \text{Mix}_r \left( z_{m,r+1} \right). \label{eq:classic-gossip-mix}
\end{align}
Another variation is to average parameters before each optimization step \cite{jiang2017CDSGD,lian2017D-PSGD}
\begin{align}
  \vspace{-1mm}
  x_{m,r} &= \text{Mix}_r(z_{m,r})  \label{eq:overlap-gossip-mix} \\
  g_{m,r} &= \nabla_{z_{m, r}} \Ell(z_{m, r}, b_{m,r})\\
  z_{m,r+1} &=\text{Opt}\left(x_{m,r}, g_{m,r} \right)
\end{align}
where $x_{m,0}=z_{m,0}=x_0$.
Unrolled, the order of operations ($g_{m,r}$, $z_{m,r+1}$, $x_{m,r+1}$) is still the same, However, now the gradient $g_{m,r}$ is calculated at $z_{m,r}$, while being applied in the optimizer step to $x_{m,r}$. The benefit of this approach is that it allows the mixing communication and the gradient computation to overlap. However, in the classic formulation the gradient in the $r+1^\text{th}$ round has information from other workers after their optimization step in round $r$, while in this variation the gradient does not. Thus, while the gradient is not temporally stale, it is noisier than in the classic version.

\section{Factored Gossip DiLoCo}
\label{sec:method}

We first discuss our factored gossip formulation in \cref{sec:method:our-formulation}. We then show that it allows for a more communication friendly (but noisier) rearrangement of DiLoCo in \cref{sec:method:rearranged-diloco}, and further discuss implications on overlapping communication in \cref{sec:method:staleness}. Finally, we discuss the importance of consensus, as well as how to control and measure its effects, in \cref{sec:method:consensus}.

\subsection{Our Factored Gossip Formulation}
\label{sec:method:our-formulation}
The mixing operation in \cref{eq:classic-gossip-mix} can be considered as a combination of the mixing operation in \cref{eq:overlap-gossip-mix} and mixing on the computed gradients plus resulting update. Thus, we consider a factorization into Mix1, a non-blocking parameter mixing operation like \cref{eq:overlap-gossip-mix}, and Mix2, a blocking gradient mixing operation:
\begin{align}
  x_{m,r} &= \text{Mix1}_r \left( z_{m, r} \right)\\
  g_{m,r} &= \nabla_{z_{m, r}} \Ell(z_{m, r}, b_{m,r})\\
  \Delta_{m,r} &= \text{Opt}(z_{m,r},g_{m,r}) - z_{m, r}\\
  \widehat{\Delta_{m,r}} &= \text{Mix2}_r \left( \Delta_{m,r} \right)\\
  z_{m,r+1} &= x_{m,r} + \widehat{\Delta_{m,r}}. \label{eq:factored-formulation}
\end{align}
Here, $z_{m, r}$ represents the result after the $r^\text{th}$ update step, and $x_{m,r}$ represents a further mixed representation of it.
As the gradient in the $r+1^\text{th}$ optimization round is calculated at $z_{m, r}$, the communication for $x_{m, r}$ can overlap that computation. Thus, the $r+1^\text{th}$ optimization round has two mixing operation: Mix1 which is non-blocking communication of parameters after the previous update step, and Mix2 which is blocking communication of parameter differences (essentially an outer gradient) after the latest update step. 

Applying this for our DiLoCo case gives
\begin{align}
  \vspace{-1mm}
  &x_{m,r} = \text{Mix1}_r\left( z_{m,r} \right)\\
  &y_{m, r, 0} = z_{m,r} \\
  &\text{for } \ h=0,1, \ldots, H-1 \text { in sequence: } \nonumber\\
    &\quad g_{m, r, h} = \nabla_{y_{m, r, h}} \Ell(y_{m, r, h}, b_{m,r,h})\\
    &\quad y_{m, r, h+1} = \text{InnerOpt}\left(y_{m, r, h}, g_{m, r, h}\right)\\
  &\Delta_{m,r} = y_{m, r, H} - y_{m, r, 0} \\
  &\widehat{\Delta_{m,r}} = \text{Mix2}_r \left(\Delta_{m,r}\right) \\
  &z_{m,r+1} = \text{OuterOpt}\left(x_{m,r}, -\widehat{\Delta_{m,r}} \right). \label{eq:ours-outer-opt}
\end{align}
Vanilla DiLoCo corresponds to not having Mix1 and Mix2 being a global average. Since all workers start from the same initialization, a global Mix2 enforces exact consensus and makes Mix1 redundant, though at the cost of heavy blocking communication. In contrast, Mix1 can overlap DiLoCo's large number of inner steps $H$, thus it has a lot of time before it can block the next outer step.

This factorization raises natural questions: \textit{How necessary is blocking Mix2 for convergence? Can it be removed if Mix1 is sufficiently global (or perhaps fully global)? When, if ever, is it beneficial to use both mixing operations?}

For intuition, in \cref{sec:appx:consensus} we analyze a simplified setting where both optimizers are SGD, under strong stability assumptions. Let $\rho_1,\rho_2\in[0,1]$ denote the contraction factors of Mix1 and Mix2. For $\rho_1<1$, we show that for any $r,h$ the expected average distance between any two workers is bounded:
\begin{align}\label{eq:our-consensus-bound}
\ec{\frac{1}{M^2}\sum_{m,s=1}^M \sqn{y_{m,r,h}-y_{s,r,h}}} \le 2\eta^2\sigma^2 HF,
\end{align}
where $F = 1+\frac{\rho_2}{1-\rho_1}$, $\eta$ is the inner learning rate, and $\sigma^2$ bounds gradient variance. We also show that the vanilla DiLoCo case ($\rho_1=1,\rho_2=0$) lies on the boundary of this regime: the global Mix2 enforces exact consensus at the start of every outer round, so the bound reduces to $2\eta^2\sigma^2 H$ (i.e.\ $F=1$ by convention), matching \citet{khaled2025understandingouteroptimizers}.

The bound highlights that both Mix1 and Mix2 reduce consensus error, but Mix1 is critical: if $\rho_1\to 1$ without a global Mix2, the consensus error can diverge. In contrast, $\rho_2$ mainly tightens the bound, setting $\rho_2=1$ keeps it finite but looser.
Thus we consider the following four main configurations
\begin{tight_itemize}
    \item \textbf{DiLoCo}: Our framework with $\rho_1=1,\rho_2=0$, so $F=1$
    \item \textbf{LocalM1M2}: Our framework with pairwise gossip for both Mix1 and Mix2, so $\rho_1=\rho_2=\rho^{\uparrow}_\text{pair}$ and thus $F>2$. Note that this baseline is very similar to NoLoCo~\cite{Kolehmainen2025NoLoCo}, though with dampening and pipeline randomization removed.
    \item \textbf{GlobalM1LocalM2}: Our framework with global averaging for Mix1 and pairwise gossip for Mix2, so $\rho_1=0$, $\rho_2=\rho^{\uparrow}_\text{pair}$ and $\frac{3}{2}< F < 2$. Note that this has the same amount of blocking communication as LocalM1M2.
    \item \textbf{GlobalM1}: Our framework with global averaging for Mix1 and no Mix2, thus no blocking communication, so $F=2$. Note that this is a fixed factor regardless of $M$.
\end{tight_itemize}
We also consider the following variant to demonstrate that the JS-distance metric introduced in \cref{sec:method:consensus} is a useful measure for improving consensus and convergence:
\begin{tight_itemize}
    \item \textbf{GlobalM1LocalM2-subset}: global Mix1 with pairwise-gossip Mix2 applied only to a subset of parameter blocks (token embeddings, LM-Head, and the first 2 transformer layers, 43\% of parameters at 1.5B). The subset is selected offline via per-block JS-distance sensitivity (\cref{sec:appx:js-diagnostic}).
\end{tight_itemize}

Note that GlobalM1 is still approximate synchronization, since the parameters are not synchronized at the end of the outer step. Only by having Mix2 as global will the outer step be globally synchronized. However, reducing Mix2 (by decreasing $\rho_2$) reduces blocking communication while linearly decreasing the consensus error in \cref{eq:our-consensus-bound}. Detailed per-worker pseudo-code, including the asynchronous launch of $\text{Mix1}$, is in \cref{alg:factored-diloco} (\cref{sec:appx:algorithm}).

\subsection{Rearranged DiLoCo}
\label{sec:method:rearranged-diloco}
We now show that our framework with Mix2 removed and Mix1 as global averaging is essentially DiLoCo with rearranged steps and noisier outer gradients. Note that we have globally consistent parameters after Mix1, so $x_{m,r}=x_r$. We also make the assumption that the outer update step is linear in the outer gradient and has global parameters, thus $\text{OuterOpt}(x_r,g_{m,r}) = x_r - \gamma(a_r g_{m,r} + b_r)$ where $a_r$ and $b_r$ do not depend on $m$. Then our formulation is
\begin{align}
  \vspace{-1mm}
  &y_{m, r, 0} = z_{m,r} \\
  &\Delta_{m,r} = y_{m, r, H} - y_{m, r, 0} \\
  &z_{m,r+1} = \text{OuterOpt}\left(x_{r}, -\Delta_{m,r} \right) \label{eq:rearranged-outer-opt}\\
  &\qquad\quad = x_{r} - \gamma\left(-a_t \Delta_{m,r} + b_t) \right)\\
  &x_{r+1} = \frac{1}{M} \sum_{m=1}^M z_{m,r+1}\\
  &\qquad\quad = \text{OuterOpt}\left(x_{r}, -\Delta_{r} \right)
\end{align}
where $\Delta_{r} = \frac{1}{M} \sum_{m=1}^M \Delta_{m,r}$. 
This is equivalent to the original DiLoCo algorithm in \cref{eq:vanilla-diloco-last-line} with two differences. First, the initial point of the inner optimization steps $y_{m, r, 0}$ starts at the local value $z_{m,r}$ rather than its globally consistent value $x_{r}$. This makes the local outer gradients $-\Delta_{m,r}$ more noisy, they are still proper outer gradients but they are computed starting at noisy point $z_{m,r}=x_r+\epsilon_m$ rather than globally consistent point $x_{r}$.
Second, the global average happens after the optimizer step, rather than before, and is a parameter average not a gradient average. Due to the assumptions on the optimizer step $x_{r+1}$ has the exact same value as DiLoCo, the result of the outer optimizer on $x_r$ and the global average of the local outer gradients, though with noisier local outer gradients. However this rearrangement allows the communication of the global averaging to overlap the next ($r+1^\text{th}$) optimization round's computation.

In the appendix (\cref{sec:appx:further-results}) we show that DiLoCo's outer optimizer satisfies linearity, and to satisfy global parameters we need to also synchronize the state. However, as discussed there, this greatly increases the communication volume, and its performance does not justify it.

\subsection{Differentiating Non-Blocking Communication and Staleness}
\label{sec:method:staleness}
The analysis as rearranged DiLoCo in the previous section provides a clear comparison between our non-blocking operation and asynchronous methods for DiLoCo that introduce staleness. In order for there to not be staleness, the $r^\text{th}$ outer optimization step that yields $z_{m,r}$ (\cref{eq:ours-outer-opt,eq:rearranged-outer-opt}) must come before the computation (local steps) in the $r+1^\text{th}$ round. This is a hard synchronization point. As explained in \cref{sec:related-work}, asynchronous methods like \citet{douillard2025streaming} do not obey this, introducing staleness. We compare against Streaming DiLoCo's staleness-introducing communication--computation overlap approach in \cref{sec:appx:streaming}.

Our formulation factorizes the communication coming up to that point as Mix1 and Mix2. As explained earlier, having Mix1 overlap computation does not introduce staleness because $z_{m,r}$ has the same number of outer update steps as $x_{m,r}$. Thus, workers compute their next outer gradient from the latest \emph{local} update $z_{m,r}$, rather than the latest \emph{global} update $x_{m,r}$, so $z_{m,r}$ is \emph{temporally correct} but only represents a \emph{local} average rather than a globally average. This introduces a starting-point noise term, not a temporal lag, so the outer gradients are noisier than DiLoCo's but never stale. After the outer gradients have been computed and potentially mixed with Mix2, workers then apply them to a more global starting point due to Mix1. Note that Mix2 by definition reduces noise in the outer gradients, so it can directly compensate for the disparity between $x_{m,r}$ and $z_{m,r}$.

\subsection{Consensus Control with Gossip}
\label{sec:method:consensus}
\citet{kong2021consensuscontroldecentralizeddeep} observe that the performance gap between local and global averaging is highly related to the consensus distance between workers, which they define as the average L2 distance between worker parameters and the mean parameters. They theoretically show, and empirically validate, that when consensus distance is below a critical threshold, local averaging converges as fast as global averaging. \citet{khaled2025understandingouteroptimizers} similarly connect consensus distance to optimization performance, where their convergence rates depend on a result similar to \cref{eq:our-consensus-bound}.

As a result, it is important to understand and measure the consensus distance. During training we log the more efficient L2 consensus distance of \citet{kong2021consensuscontroldecentralizeddeep}
\begin{align}
    CD_{L2}(X) = \frac{1}{M}\sum_{m=1}^M \|x_m - \overline{x}\|_2. \label{eq:l2-distance}
\end{align}
As we show in \cref{sec:results}, the ordering of our logged L2 distance matches $F$ in \cref{eq:our-consensus-bound}.

However, the trend in L2 distance does not match the trend in training performance. We designate this to the theory using strong assumptions to relate L2 distance to function value and its loss. In their proof \cite{khaled2025understandingouteroptimizers}, via assumptions on smoothness, convexity and Lipschitz constants, bounds on L2 distance are converted to bounds on gradients and then to bounds on function loss. 

Thus, we also measure functional distance via the Jensen--Shannon (JS) distance between each worker's per-token output distribution and that of the averaged parameters,
\begin{align}
    CD_{JS}(X) = \frac{1}{M}\sum_{m=1}^M JS\left(f(x_m,b), f(\overline{x}, b)\right) \label{eq:js-distance}
\end{align}
where $f(x,b) := \mathrm{softmax}\!\left(\mathrm{logits}(x,b)\right)$ produces a per-token probability distribution and $b$ is a held-out batch, sampled once and reused at every measurement (full details in \cref{sec:appx:consensus-metrics}).
As we show in \cref{sec:results} and \cref{fig:js-dist-guide}, this tracks training performance much better: instability and loss spikes typically coincide with spikes in JS distance.

The JS-distance metric can also be used to determine that not all parameter blocks contribute equally to functional disagreement. We exploit this with the GlobalM1LocalM2-subset configuration introduced earlier, which applies Mix2 only to the most sensitive blocks. The per-block sensitivity analysis, the choice of subset, and a subset-size ablation are in \cref{sec:appx:js-diagnostic}. The configuration's results in \cref{tab:results-10B-tokens} and \cref{fig:loss-curves,fig:compute-util,fig:distances-main-exp} demonstrate that the JS-distance metric is useful in determining a good compromise between reduced communication volume and training performance.

\begin{table*}[t!]
\centering
\small
\caption{Summary of our experimental configurations and results. The top section contains the main results comparing our four configurations, with ablations in the following sections. Here, \emph{local} denotes pairwise gossip averaging, \emph{global} denotes full averaging across all replicas, and GBS is the global batch size (per-worker batch is $\text{GBS}/M$). The two $M=16$ blocks correspond to two scaling strategies: GBS=512 keeps the global batch fixed (so the per-worker batch halves relative to $M=8$), while GBS=1024 keeps the per-worker batch fixed to the $M=8$ baseline (so the global batch doubles). All runs use AdamW (inner LR $3{\times}10^{-4}$) as the inner optimizer and SGD with Nesterov momentum (outer LR $0.7$, momentum $0.9$) as the outer optimizer, matching DiLoCo's hyperparameters.
Overall, overlapped global Mix1 communication substantially boosts compute utilization, while adding Mix2 improves consensus and convergence at the cost of utilization (since it is blocking).
}
\setlength{\tabcolsep}{4pt}
\resizebox{\linewidth}{!}{%
\begin{tabular}{l c c c c c c c c c c c}
	\toprule
     & $M$ & $H$ & GBS & Mix1 & Mix2 & \multicolumn{2}{c}{Val PPL at $T$ tokens} & \multicolumn{2}{c}{Compute Util. at} & \multicolumn{2}{c}{Max Distance}\\
	\textbf{Config (row)} & & & & & & $T$=10B & $T$=30B & 100Mbps & 200Mbps & L2 & JS\\
	\midrule
	\textbf{Sync DP}
		& 8 & - & 512 & - & - & 17.53 & 14.85
        & 1\% & 1\% & 0 & 0 \\
	\textbf{DiLoCo}
		& 8 & 100 & 512 & none & global & 18.65 & 15.52
        & 36\% & 53\% & 65.5 & 0.21 \\
	\textbf{LocalM1M2}
		& 8 & 100 & 512 & local & local & 19.37 & 15.83
        & 49\% & 66\% & 234.8 & 0.24 \\
	\textbf{GlobalM1}
		& 8 & 100 & 512 & global & none & 19.42 & 15.91
        & 56\% & 100\% & 145.1 & 0.34 \\
	\textbf{GlobalM1LocalM2}
		& 8 & 100 & 512 & global & local & 18.80 & 15.60
        & 36\% & 66\% & 106.5 & 0.22 \\
	\hline
	\textbf{DiLoCo}
		& 8 & 500 & 512 & none & global & 22.67  & -
        & 74\% & 85\% & 148.1 & 0.26 \\
	\textbf{LocalM1M2}
		& 8 & 500 & 512 & local & local & 23.13 & -
        & 83\% & 91\% & 407.1 & 0.26 \\
	\textbf{GlobalM1LocalM2}
		& 8 & 500 & 512 & global & local & 22.78 & -
        & 83\% & 91\% & 201.7 & 0.26 \\
	\textbf{GlobalM1}
		& 8 & 500 & 512 & global & none & 23.17 & -
        & 100\% & 100\% & 285.9 & 0.42 \\
	\hline
	\textbf{DiLoCo}
		& 8 & 1k & 512 & none & global & 28.17  & -
        & 85\% & 92\% & 204.2 & 0.27 \\
	\textbf{LocalM1M2}
		& 8 & 1k & 512 & local & local & 28.82 & -
        & 91\% & 95\% & 435.9 & 0.27 \\
	\textbf{GlobalM1LocalM2}
		& 8 & 1k & 512 & global & local & 28.13 & -
        & 91\% & 95\% & 283.2 & 0.27 \\
	\textbf{GlobalM1}
		& 8 & 1k & 512 & global & none & 28.65 & -
        & 100\% & 100\% & 384.8 & 0.49 \\
	\hline
	\textbf{Sync DP}
		& 16 & - & 512 & - & - & 17.22 & -
        & 0\% & 1\% & 0 & 0 \\
	\textbf{DiLoCo}
		& 16 & 100 & 512 & none & global & 19.99  & -
        & 21\% & 34\% & 69.0 & 0.23 \\
	\textbf{LocalM1M2}
		& 16 & 100 & 512 & local & local & 22.24 & -
        & 24\% & 49\% & 546.3 & 0.26 \\
	\textbf{GlobalM1}
		& 16 & 100 & 512 & global & none & 21.39 & -
        & 26\% & 52\% & 154.3 & 0.35 \\
	\textbf{GlobalM1LocalM2}
		& 16 & 100 & 512 & global & local & 20.70 & -
        & 17\% & 34\% & 109.9 & 0.22 \\
	\hline
	\textbf{Sync DP}
		& 16 & - & 1024 & - & - & 18.17 & -
        & 1\% & 1\% & 0 & 0 \\
	\textbf{DiLoCo}
		& 16 & 100 & 1024 & none & global & 20.91 & -
        & 34\% & 51\% & 66.0 & 0.20 \\
	\textbf{LocalM1M2}
		& 16 & 100 & 1024 & local & local & 23.01 & -
        & 49\% & 66\% & 483.2 & 0.24 \\
	\textbf{GlobalM1}
		& 16 & 100 & 1024 & global & none & 22.26 & -
        & 52\% & 100\% & 144.2 & 0.31 \\
	\textbf{GlobalM1LocalM2}
		& 16 & 100 & 1024 & global & local & 21.48 & -
        & 34\% & 66\% & 108.9 & 0.22 \\
	\hline
	\textbf{GlobalM1LocalM2-subset}
		& 8 & 100 & 512 & global & local & 19.15 & -
        & 45\% & 82\% & 123.6 & 0.22 \\
	\textbf{GlobalM1LocalM2-subset}
		& 8 & 500 & 512 & global & local & 23.06 & -
        & 92\% & 96\% & 239.7 & 0.28 \\
	\bottomrule
\end{tabular}%
}
\label{tab:results-10B-tokens}
\end{table*}

\section{Results}
\label{sec:results}

As we target public, internet-scale decentralized training, our experimental setting differs substantially from DiLoCo~\cite{douillard2023diloco,douillard2025streaming}. We assume single-GPU workers on consumer networks, roughly 50Mbps–1Gbps, rather than multi-GPU datacenter nodes with 10–100Gbps links. This constrains model size due to per-device memory limits, though this could be alleviated using model parallelism. Pipeline parallelism is especially favorable here because bandwidth is amortized per stage, and its longer forward–backward step times hide more communication \cite{ryabinin2023swarm,ramasinghe2025protocolmodelsscalingdecentralized}. However, this is beyond the scope of this paper.

\subsection{Training experiments}
We train a 1.5B parameter Llama 3 model \cite{llama3} on FineWeb \cite{penedo2024fineweb}, and run experiments for 10B tokens, with main experiments replicated to 30B tokens (compute optimal). In our setup, we use 8 workers, each with a 40GB A100 GPU. We train with a sequence length of 1024 and a batch size of 512. We use a peak learning rate of $3\times 10^{-4}$ and a warmup of 1000 steps, with linear learning rate decay to zero. In order to plot validation loss, since FineWeb does not have a validation set we make a running validation set by randomly excluding some documents from the training set (deterministic across runs).

We show our main results in \cref{tab:results-10B-tokens} (top section) and \cref{fig:loss-curves}. As mentioned in \cref{sec:method:our-formulation}, LocalM1M2 is very similar to NoLoCo, and thus functions as a strong baseline. We also compare against a reimplementation of NoLoCo, though without pipeline randomization as it is orthogonal to our DP focus, in \cref{sec:appx:noloco}.

Table 1 shows a clear utilization-performance trade-off under these low bandwidth settings. Sync-DP achieves the best validation perplexity (17.53) but essentially no utilization (~1\%). DiLoCo (H=100) is the main baseline, reaching moderate utilization (36\% @100Mbps, 53\% @200Mbps) with reasonable perplexity (18.65/15.52 at 10B/30B tokens). Purely local mixing (LocalM1M2) improves utilization (49\%/66\%) but has worse perplexity (19.37/15.83) and much larger divergence. Using Mix1 alone (GlobalM1) maximizes utilization (56\% @100Mbps, 100\% @200Mbps) but suffers a slight penalty in perplexity (19.42/15.91) and increases divergence. Adding Mix2 restores stability: GlobalM1LocalM2 improves perplexity to 18.80/15.60 while keeping DiLoCo-like utilization at 100Mbps and improving it at 200Mbps. The subset-Mix2 variant further boosts utilization (45\%/82\%) with a modest perplexity cost.

\begin{figure*}[t]
    \centering
    \includegraphics[width=0.49\linewidth]{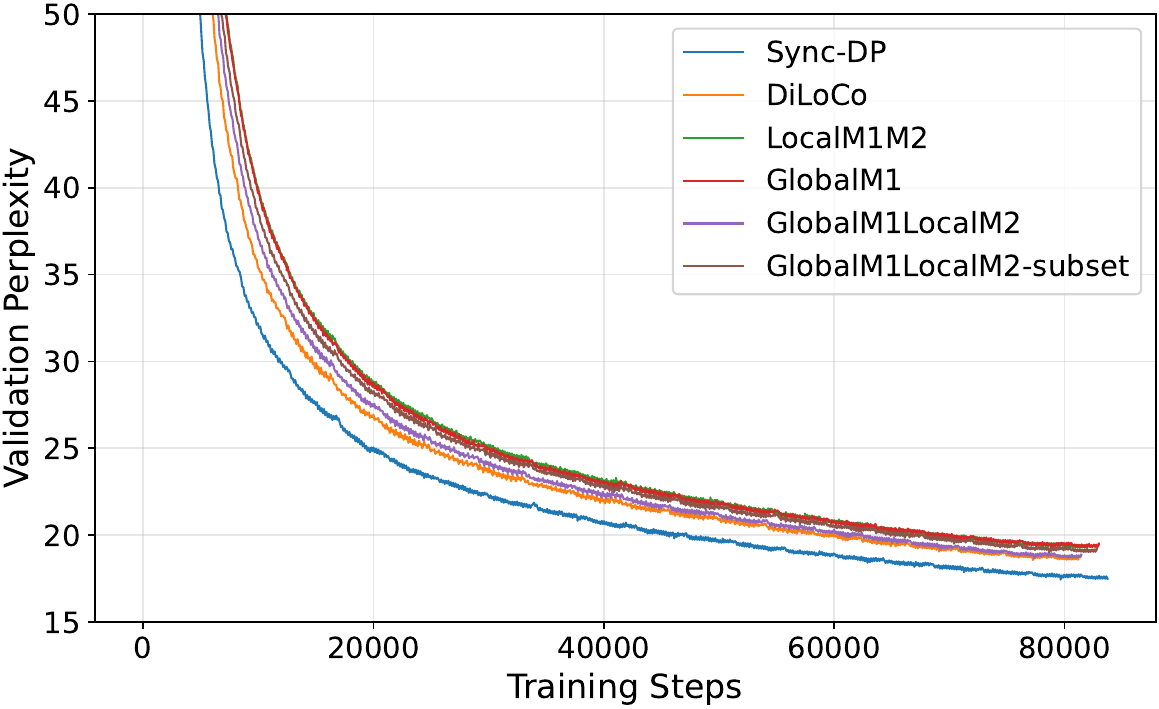}
    \includegraphics[width=0.49\linewidth]{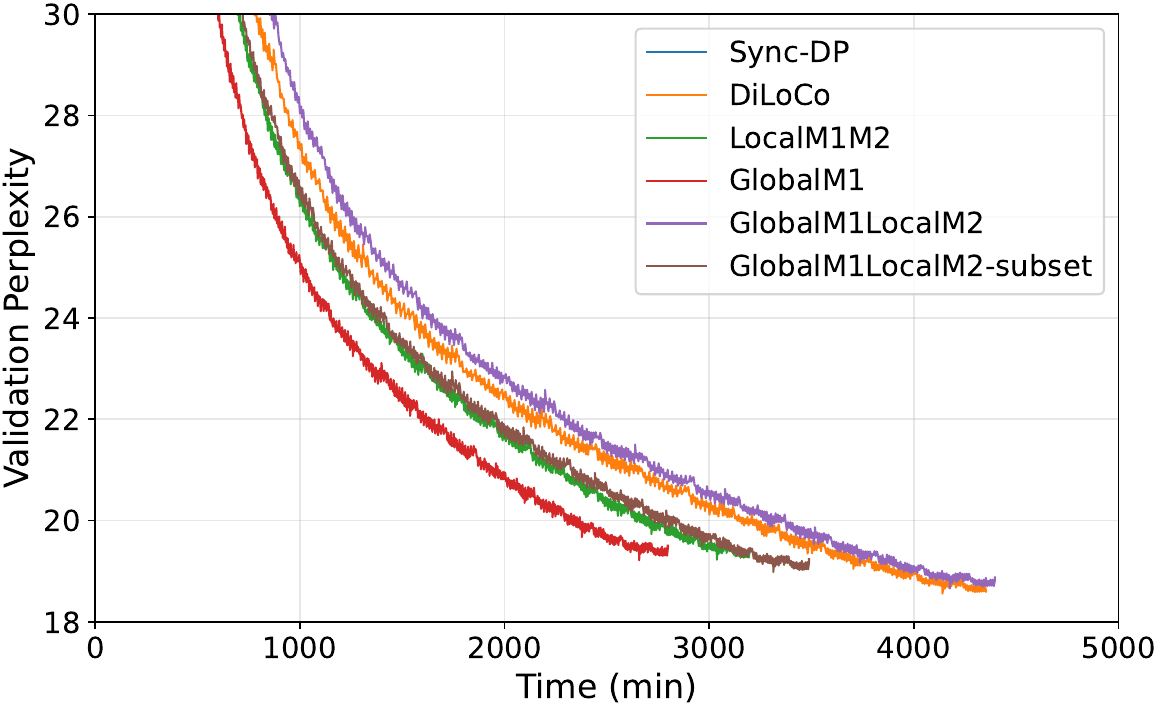}
    \includegraphics[width=0.49\linewidth]{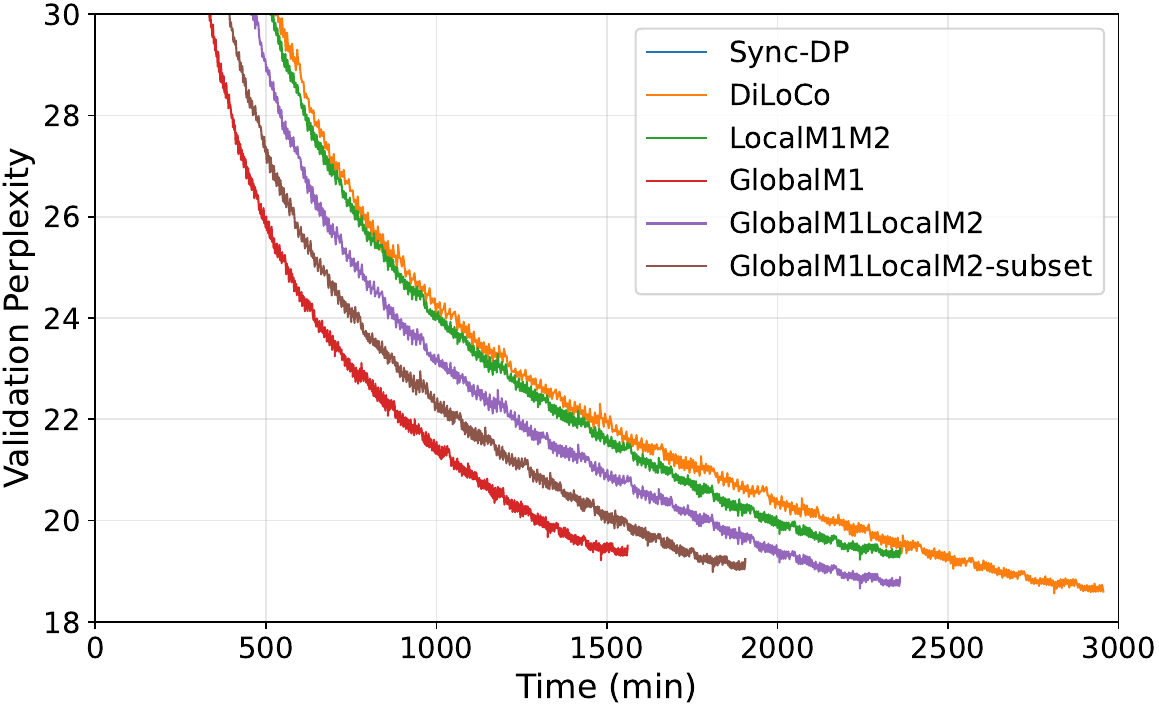}
    \caption{\textbf{Validation perplexity curves}. \textbf{(Left)} Validation perplexity per step. \textbf{(Middle)} Validation perplexity over time with 100Mbps bandwidth. \textbf{(Right)} Validation perplexity over time with 200Mbps bandwidth.
    While GlobalM1LocalM2 has the smallest perplexity gap to DiLoCo, removing blocking communication yields much faster convergence with respect to wall time.
    }
    \label{fig:loss-curves}
\end{figure*}

\subsection{Compute Utilization}
Following \citet{douillard2025streaming}, we build a simple compute-utilization simulator and report on a range of bandwidth. Our simulator has four components, forward/backward compute, outer optimization compute, Mix1 communication, and Mix2 communication, and assumes a fixed per-link bandwidth between any pair of workers. We additionally provide a simulation of a 5\% communication failure rate.

When Mix1's global all-reduce is fully hidden under the local-step compute, our methods achieve substantially higher utilization than DiLoCo. In our settings this occurs for $H=100$ once bandwidth exceeds roughly 200Mbps, and for $H=500$ across the full bandwidth range. More precisely, at our 1.5B-parameter $M=8$ setup the minimum $H$ for full Mix1 overlap is $H \approx 180$ at 100Mbps and $H \approx 90$ at 200Mbps. In this regime, GlobalM1 reaches 100\% utilization and adding various levels of Mix2 (in regards to number of parametes) reduces utilization smoothly. However, when Mix1 cannot be fully overlapped due to extremely low bandwidth, utilization can drop sharply, and can even fall below DiLoCo, since we now pay for two communication phases. Notably, a bandwidth-optimal all-reduce requires at most about twice the bandwidth of pairwise gossip, so the utilization differences are driven primarily by \emph{blocking} and restart behavior rather than raw bandwidth alone. Translated to wall-clock time, training to 1T tokens at 200Mbps with $M=8$ and $H=100$ takes approximately 16 weeks for GlobalM1 versus 29 weeks for DiLoCo.

Under simulated failures, all-reduce must restart while mixing can proceed with effectively weaker mixing, making DiLoCo's utilization degrade across bandwidth. In contrast, our methods only lose utilization once failures push communication time beyond what computation can hide, and LocalM1M2 is unaffected since it avoids all-reduce entirely.

\textbf{Memory overhead.} Our methods add no GPU-memory overhead beyond DiLoCo: the outer optimizer state lives on CPU and is communicated via GLOO, so the only extra per-step allocation is a single parameter-sized buffer for the outer update. In our setup, peak VRAM during local steps is $28.29$\,GiB and rises to $29.89$\,GiB during the outer step, identical across DiLoCo and our four configurations.

\begin{figure}[t]
    \centering
    \includegraphics[width=0.7\linewidth]{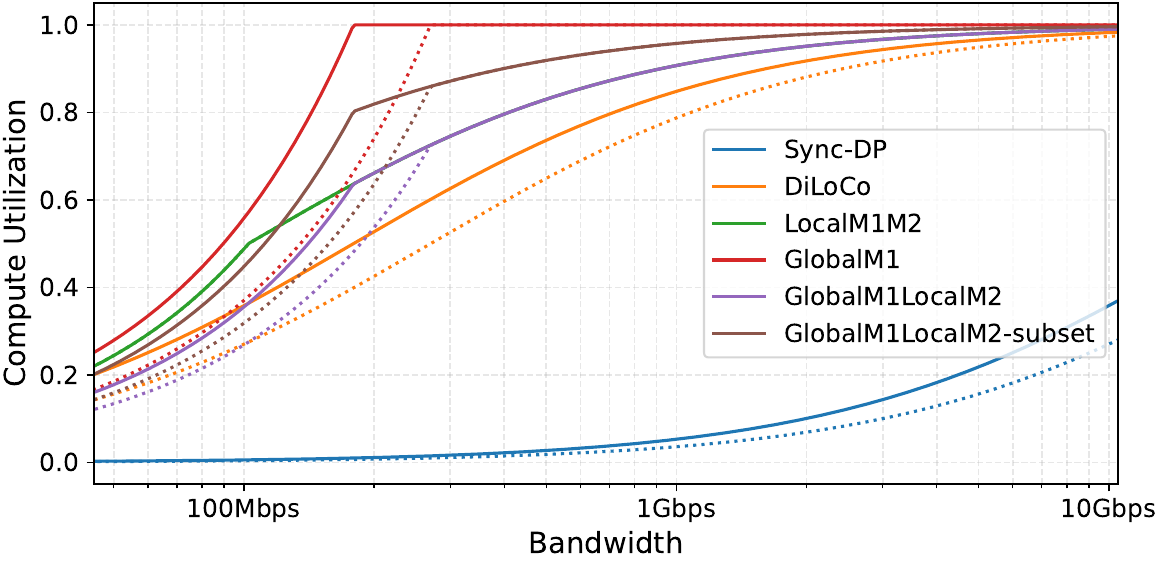}\\
    \includegraphics[width=0.7\linewidth]{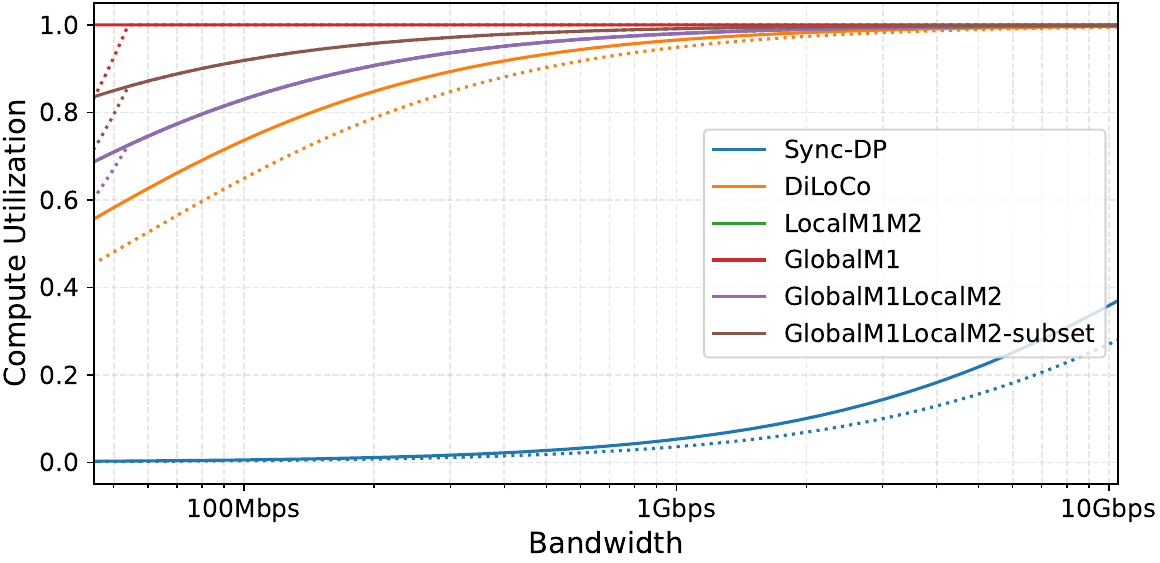}
    \caption{Compute utilization under different bandwidths for $M=8$ workers and either \textbf{Top} $H=100$, or \textbf{Bottom} $H=500$ inner steps. Dotted lines show results with 5\% communication failure rate. 
    When Mix1 is fully hidden, utilization approaches 100\%, and under failures the drop is smaller than all-reduce baselines until communication exceeds the available overlap.
    }
    \label{fig:compute-util}
\end{figure}

\begin{figure*}[t]
    \centering
    \includegraphics[width=0.49\linewidth]{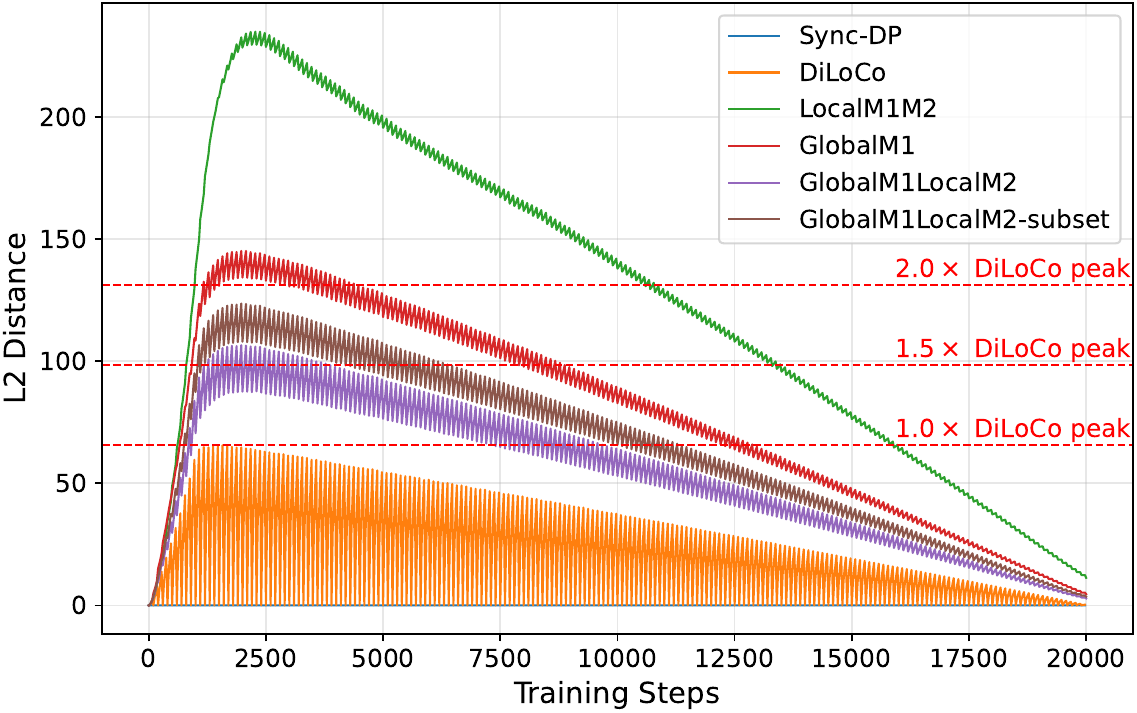}
    \includegraphics[width=0.49\linewidth]{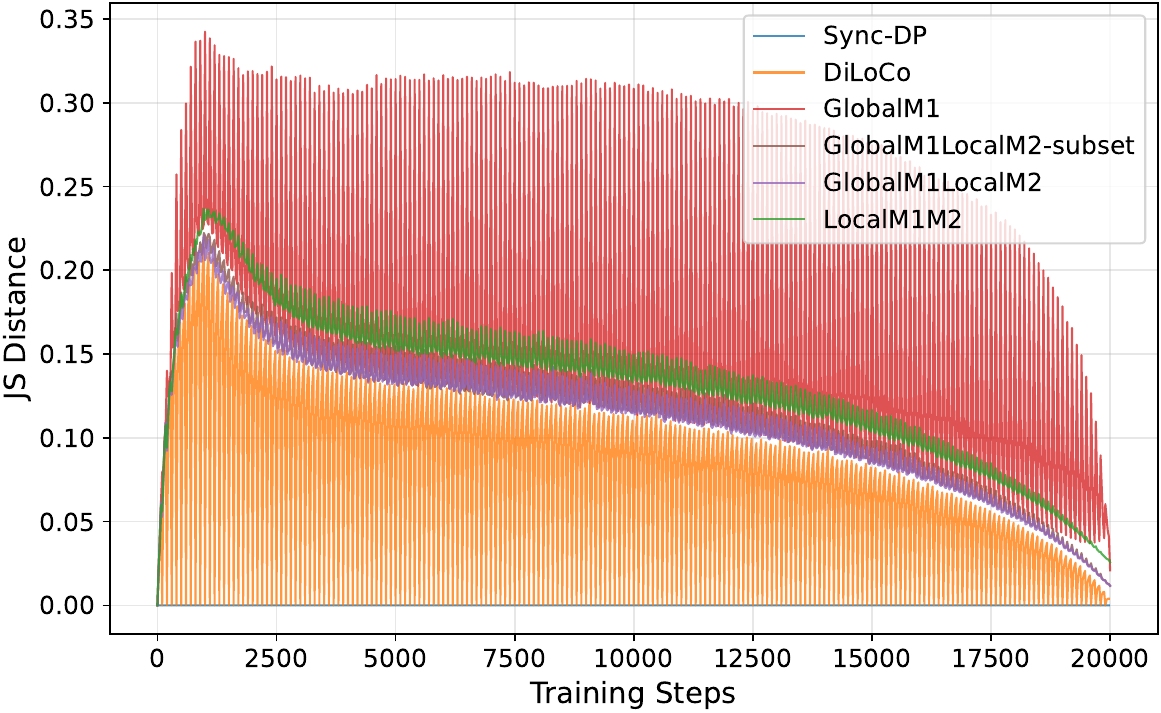}
    \includegraphics[width=0.49\linewidth]{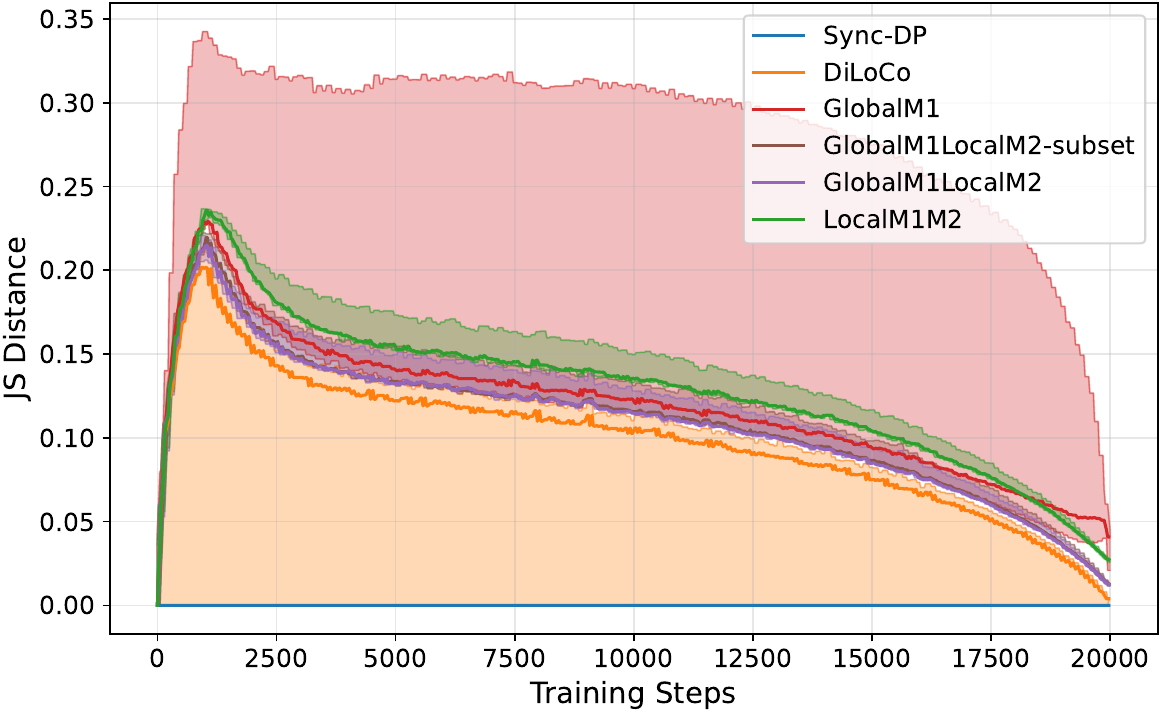}
    \caption{\textbf{Distance plots.} \textbf{(Left \& Middle)} L2 and JS distance per step respectively. Note that Sync-DP has zero distance everywhere due to its global synchronization, and DiLoCo has zero at the end of every outer step (every $H=100$ steps) due to the global synchronization there. \textbf{(Right)} Range (min and max) as well as median JS distance per step within a window of 125 steps.
    L2 distance aligns with our theory on consensus distance, while JS distance highlights instability that is reduced most by adding Mix2.
    }
    \label{fig:distances-main-exp}
\end{figure*}

\begin{figure*}[t]
    \centering
    \includegraphics[width=1\linewidth]{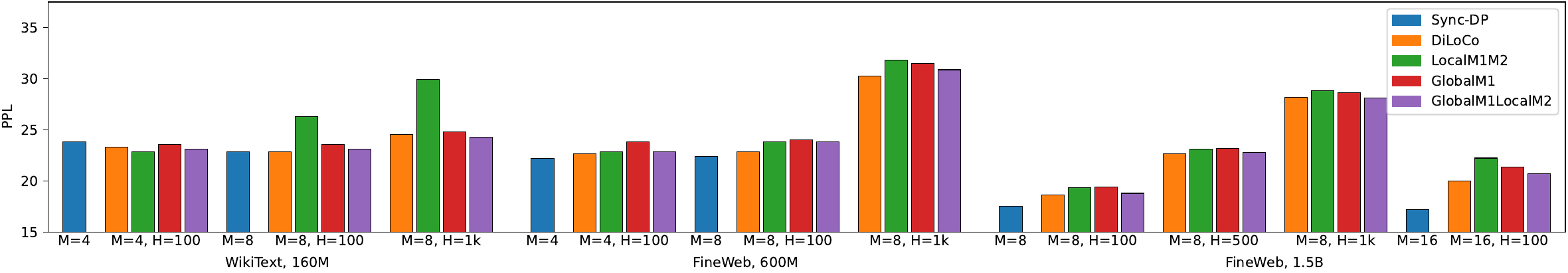}
    \caption{\textbf{Scaling ablations.} Validation perplexity for different number of workers ($M$) and inner steps ($H$) on three different model sizes, 160M parameters on WikiText, and 600M parameters on FineWeb, and 1.5B parameters on FineWeb. The ablations show a growing trade-off with scale: higher $M$ and $H$ amplify the cost of imperfect consensus, while stronger mixing mitigates the perplexity hit.
    }
    \label{fig:scaling}
\end{figure*}

\subsection{Distance Metrics}
We plot L2 and JS consensus distances in \cref{fig:distances-main-exp}. Sync-DP has zero distance by construction, and DiLoCo returns to zero at the end of each outer step ($H=100$). Both metrics spike at the end of warmup and settle into a steady-state level. For DiLoCo, distances quickly rise to steady state as workers drift during local computation. The non-global blocking methods show the same qualitative behavior in L2, sharp drops after each outer step (not to zero) followed by rapid return to steady state, consistent with partial consensus followed by local-step drift. JS distance behaves differently: it spikes \emph{after} the outer step and then decays during the subsequent local steps. This suggests the outer update is the main source of short-term instability, while the local trajectory re-stabilizes the model. 

L2 curves are cleanly separated and roughly matches the theory in with \cref{sec:method:our-formulation} peak near the end of warmup: DiLoCo is lowest; GlobalM1 reaches roughly $2\times$ DiLoCo; GlobalM1LocalM2 and GlobalM1LocalM2-subset sit between about $1.5\times$ and $2\times$; and LocalM1M2 is highest (>$2\times$). This matches the ordering predicted by \cref{sec:method:our-formulation}.

JS curves are less separable but reveal stability effects within an outer step. Methods with local Mix2 (GlobalM1LocalM2 and LocalM1M2) show much smaller within-step variation, indicating Mix2 stabilizes the outer update. GlobalM1LocalM2 is slightly lower overall, reflecting the benefit of global non-blocking Mix1. GlobalM1 exhibits large drift within an outer step, yet its minimum within each outer step is comparable to GlobalM1LocalM2 and below LocalM1M2, suggesting global Mix1 improves the eventual (steady-state) functional consensus even when short-term instability is higher. The median plot highlights this steady state: all three methods in our formulation remain close to DiLoCo, with LocalM1M2 consistently highest and GlobalM1/GlobalM1LocalM2 similar for most of training (diverging late when GlobalM1’s learning rate drops), consistent with their broadly similar validation performance.

\section{Ablations}

\textbf{Scaling.} 
In \cref{fig:scaling} we evaluate two smaller Llama-style models (160M on WikiText~\cite{wikitext} and 600M on FineWeb~\cite{penedo2024fineweb}) while varying $M$ and $H$. On WikiText, Sync-DP is not always best, with DiLoCo and other low-communication variants sometimes slightly better, whereas on FineWeb Sync-DP remains strongest, consistent with our 1.5B results. For $H=100$, performance is broadly similar across methods for both $M=4$ and $M=8$, except that LocalM1M2 degrades at $M=8$ on WikiText, consistent with larger consensus error at higher worker counts. Increasing to $H=1k$ has only a small effect at 160M, but is noticeably worse at 600M (and at 1.5B), indicating reduced tolerance to long local intervals as scale increases. 

\textbf{Changing the number of inner steps ($H=100,500,1k$).}
Increasing $H$ raises compute utilization (\cref{fig:compute-util}) but can stress stability for non-blocking GlobalM1. In \cref{tab:results-10B-tokens}, GlobalM1's max JS distance grows from 0.34 ($H=100$) to 0.42 ($H=500$) and 0.49 ($H=1k$), with a corresponding (small) perplexity gap to DiLoCo at each $H$ (19.42 vs.\ 18.65; 23.17 vs.\ 22.67; 28.65 vs.\ 28.17 at 10B tokens). At the same time, GlobalM1 achieves much higher utilization, reaching 100\% once Mix1 is fully hidden (already at $H=500$ in our bandwidth range). Adding blocking communication via GlobalM1LocalM2-subset substantially reduces disagreement at $H=100$ and $H=500$ (max JS 0.22 and 0.28) while only slightly lowering utilization (45\%/82\% at $H=100$ and 92\%/96\% at $H=500$), yielding a favorable utilization--stability trade-off compared to DiLoCo (36\%/53\%, 74\%/85\%). 

\textbf{Increasing the number of workers to $M=16$.}
\cref{tab:results-10B-tokens} includes two $M=16$ blocks: \emph{Global-BS-fix} (GBS=512, per-worker batch halves) and \emph{Local-BS-fix} (GBS=1024, per-worker batch matched to the $M=8$ baseline). All methods show a slight perplexity drop relative to $M=8$, with the gap between non-blocking and blocking methods slightly widening, consistent with \citet{Charles2025DiLoCoScalingLaws} showing DiLoCo degrades with more workers and the longer convergence of mixing at larger $M$. Compute utilization is markedly higher under Local-BS-fix because the larger per-worker compute hides the all-reduce more effectively, while perplexity is somewhat worse since global batch is also larger.

\section{Conclusion and Future Work}
\label{sec:conclusion}
We introduce \emph{Factored DiLoCo}, a restructuring of DiLoCo's outer synchronization that reduces blocking communication while preserving the key role of synchronization in reducing worker disagreement. Our approach factorizes synchronization into two mixing operations: a non-blocking Mix1 that can be overlapped with local computation without temporal staleness, and a blocking Mix2 that enforces stronger consensus when needed for stability. This factorization exposes a simple trade-off between compute utilization and optimization stability, where Mix1 provides a bounded level of consensus with minimal impact on utilization, and Mix2 tightens consensus at an explicit blocking cost.

In low-bandwidth regimes representative of consumer-grade decentralized settings, we empirically show that Factored DiLoCo substantially improves compute utilization with training progress ranging from comparable to closely matching DiLoCo across model scales and hyperparameter regimes. Most importantly, these gains reduce perplexity per unit wall-clock time, a metric on which decentralized training struggles (\cref{fig:loss-curves}, middle and right). We also observe more graceful degradation under transient communication failures, due to mixing-based synchronization avoiding having to abort and restart communication under failures. Beyond standard parameter-space disagreement, we find that functional discrepancy, measured by JS distance on logits, better tracks optimization instability, motivating its use for determining when to use blocking communication. We further use this metric to identify the parameter blocks driving consensus error, and verify in an ablation (\cref{sec:appx:js-diagnostic}) that applying Mix2 only to these blocks recovers most of the full-Mix2 stability.

There are several promising directions for further developing the framework itself. First, we could determine whether to use Mix2 and with how many parameters automatically based on real-time signals (such as JS-distance spikes), rather than manually choosing such hyperparameters. Second, our analysis uses simplifying assumptions, thus extending the theory to more realistic assumptions could give better insights on how to navigate the compute utilization - optimization stability trade-off. Third, while we demonstrate improved robustness to transient failures, fully fault-tolerant decentralized training remains a challenge, especially under correlated delays and partial communication.

Beyond extending the framework itself, several orthogonal axes can also be incorporated to further reduce communication cost and/or improve consensus, and may compound gains when combined with our synchronization factorization. Pipeline parallelism amortizes bandwidth per stage and lets longer forward--backward step times hide more communication \cite{ryabinin2023swarm,ramasinghe2025protocolmodelsscalingdecentralized}, pipeline-parallel randomization reduces worker disagreement at the model-parallel level \cite{Kolehmainen2025NoLoCo}, parameter partitioning with per-partition communication schedules \cite{douillard2025streaming} trades off bandwidth and consensus at finer granularity, and compression (quantization, sparsification, low-rank) reduces the payload of each communication round. The JS-distance metric could in particular guide where or how to apply compression.



\bibliographystyle{plainnat}
\nobibliography*
\bibliography{main}

\clearpage
\appendix
\section{Factored Gossip DiLoCo: Detailed Algorithm}
\label{sec:appx:algorithm}

\cref{alg:factored-diloco} gives the full per-worker procedure for Factored Gossip DiLoCo. Each worker runs the algorithm in parallel. The two mixing operators $\mathrm{Mix1}$ and $\mathrm{Mix2}$ act on the corresponding parameter / outer-gradient values across workers; choosing them determines the framework variant (\cref{tab:appx:configs}). 

\begin{algorithm}[h]
\caption{Factored Gossip DiLoCo (per worker $m$)}
\label{alg:factored-diloco}
\begin{algorithmic}[1]
\Require initial parameters $x_0$, outer rounds $R$, inner steps per round $H$, inner optimizer $\mathrm{InnerOpt}$, outer optimizer $\mathrm{OuterOpt}$, mixing operators $\mathrm{Mix1}$ (non-blocking) and $\mathrm{Mix2}$ (blocking)
\State $z_{m,0} \gets x_0$ \Comment{all workers start identical}
\For{$r = 0, 1, \dots, R-1$}
    \State \textbf{launch} $x_{m,r} \gets \mathrm{Mix1}_r(z_{m,r})$ asynchronously \Comment{overlaps with the inner loop below}
    \State $y_{m,r,0} \gets z_{m,r}$
    \For{$h = 0, 1, \dots, H-1$}
        \State sample minibatch $b_{m,r,h}$
        \State $g_{m,r,h} \gets \nabla_{y} \mathcal{L}(y_{m,r,h},\, b_{m,r,h})$
        \State $y_{m,r,h+1} \gets \mathrm{InnerOpt}(y_{m,r,h},\, g_{m,r,h})$
    \EndFor
    \State $\Delta_{m,r} \gets y_{m,r,H} - y_{m,r,0}$ \Comment{outer pseudo-gradient}
    \State $\widehat{\Delta}_{m,r} \gets \mathrm{Mix2}_r(\Delta_{m,r})$ \Comment{blocking}
    \State \textbf{wait} for $\mathrm{Mix1}_r$ to complete; $x_{m,r}$ is its result
    \State $z_{m,r+1} \gets \mathrm{OuterOpt}\!\left(x_{m,r},\, -\widehat{\Delta}_{m,r}\right)$
\EndFor
\State \textbf{return} $\overline{z}_{R} = \tfrac{1}{M}\sum_{m=1}^{M} z_{m,R}$ \Comment{final global model}
\end{algorithmic}
\end{algorithm}

\textbf{Inner / outer optimizers.} In all our 1.5B experiments, $\mathrm{InnerOpt}$ is AdamW with peak inner learning rate $3{\times}10^{-4}$, $1000$-step linear warmup, and linear decay to zero. $\mathrm{OuterOpt}$ is SGD with Nesterov momentum (outer learning rate $0.7$, momentum $0.9$), matching DiLoCo's recipe. Smaller-model scaling ablations (\cref{fig:scaling}) use inner learning rate $4{\times}10^{-4}$.

\textbf{Choosing Mix1 and Mix2.} The four main framework configurations in the body correspond to four (Mix1, Mix2) pairs (\cref{tab:appx:configs}); the GlobalM1S variants additionally apply Mix1-style averaging to the outer optimizer state (see \cref{tab:appx:further-ablations}). For the GlobalM1LocalM2-subset variant, $\mathrm{Mix2}$ is applied only to a chosen subset of parameter blocks (the rest pass through identity); the subset selection methodology is in \cref{sec:appx:js-diagnostic}.

\begin{table}[h]
\centering
\small
\caption{The four main framework configurations realized as (Mix1, Mix2) choices in \cref{alg:factored-diloco}. ``All-reduce'' is a global average across workers; ``pairwise gossip'' is one round of random-matching pairwise averaging; ``identity'' leaves the input unchanged.}
\label{tab:appx:configs}
\begin{tabular}{l l l}
\toprule
\textbf{Configuration} & $\mathrm{Mix1}$ & $\mathrm{Mix2}$ \\
\midrule
DiLoCo                & identity              & global all-reduce \\
LocalM1M2             & pairwise gossip       & pairwise gossip \\
GlobalM1              & global all-reduce     & identity \\
GlobalM1LocalM2       & global all-reduce     & pairwise gossip \\
\bottomrule
\end{tabular}
\end{table}

\section{Further Results}
\label{sec:appx:further-results}

\begin{table*}[t!]
\centering
\small
\caption{Further ablations with $M=8$ workers. Here, \emph{local} denotes pairwise gossip averaging and \emph{global} denotes full averaging across all replicas. 
}
\resizebox{\linewidth}{!}{%
\begin{tabular}{l c c c c c c c c c c}
	\toprule
     & $\mathbf{H}$ & \textbf{Mix1} & \textbf{Mix1} & \textbf{Mix2} & \multicolumn{2}{c}{Val PPL at $T$ tokens} & \multicolumn{2}{c}{Compute Util. at} & \multicolumn{2}{c}{Max Distance}\\
	\textbf{Config (row)} & & & \textbf{state} & & 10B & 30B & 100Mbps & 200Mbps & L2 & JS\\
	\midrule
	\textbf{GlobalM1S} 
		& 100 & global & global & none & 20.91 & -
        & 28\% & 56\% & 125.5 & 1.13\\
	\textbf{GlobalM1SLocalM2} 
		& 100 & global & global & local & 19.11 & -
        & 22\% & 43\% & 93.3 & 0.25 \\
	{\textbf{GlobalM1SLocalM2-subset}} 
		& 100 & global & global & local & 19.69 & -
        & 25\% & 50\% & 106.1 & 0.26 \\
	\bottomrule
\end{tabular}
}
\label{tab:appx:further-ablations}
\end{table*}

\begin{figure}
    \centering
    \includegraphics[width=0.48\linewidth]{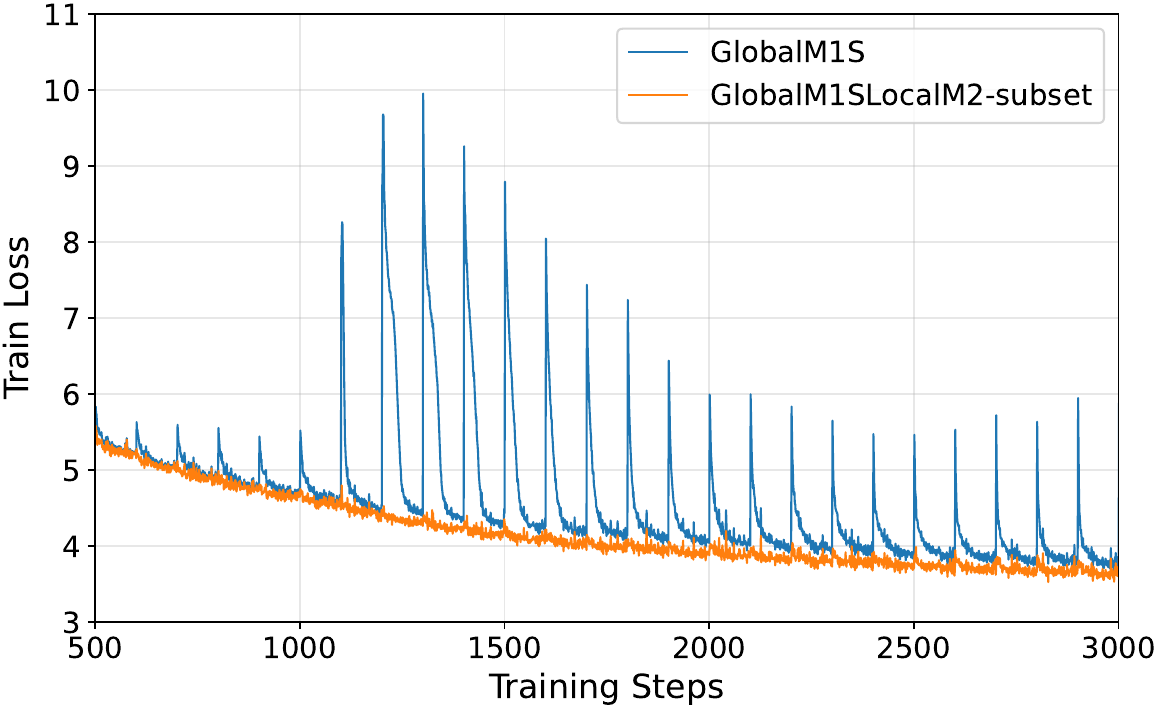}
    \includegraphics[width=0.48\linewidth]{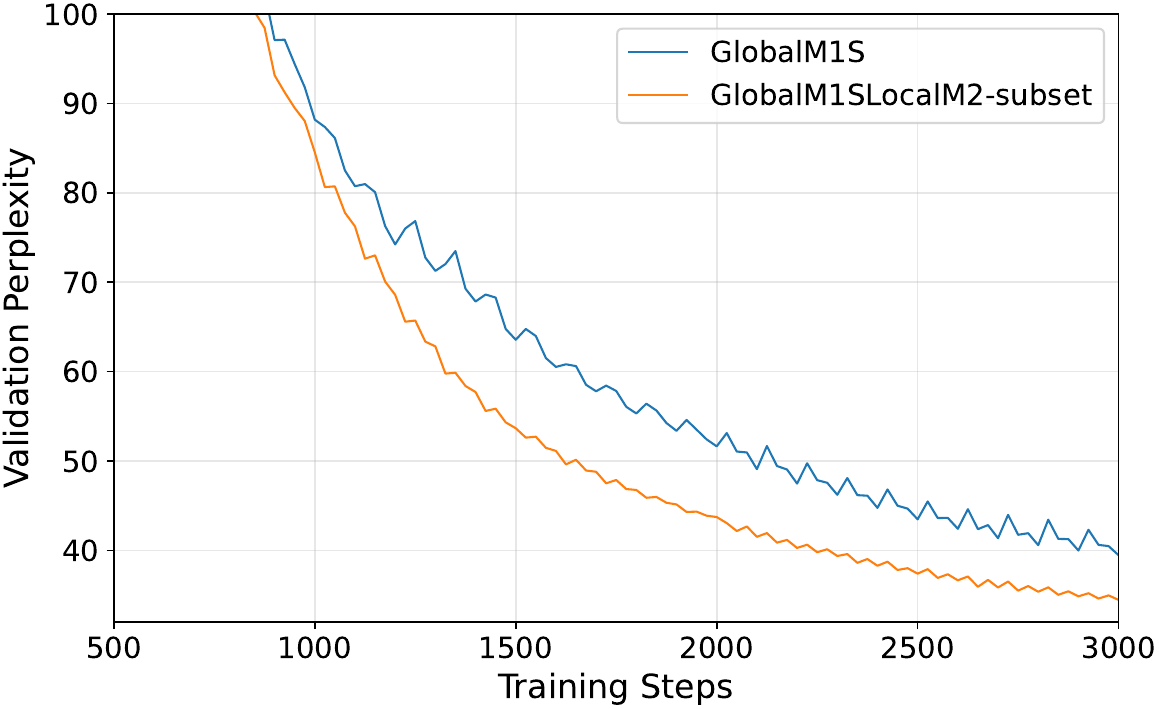}\\
    \includegraphics[width=0.48\linewidth]{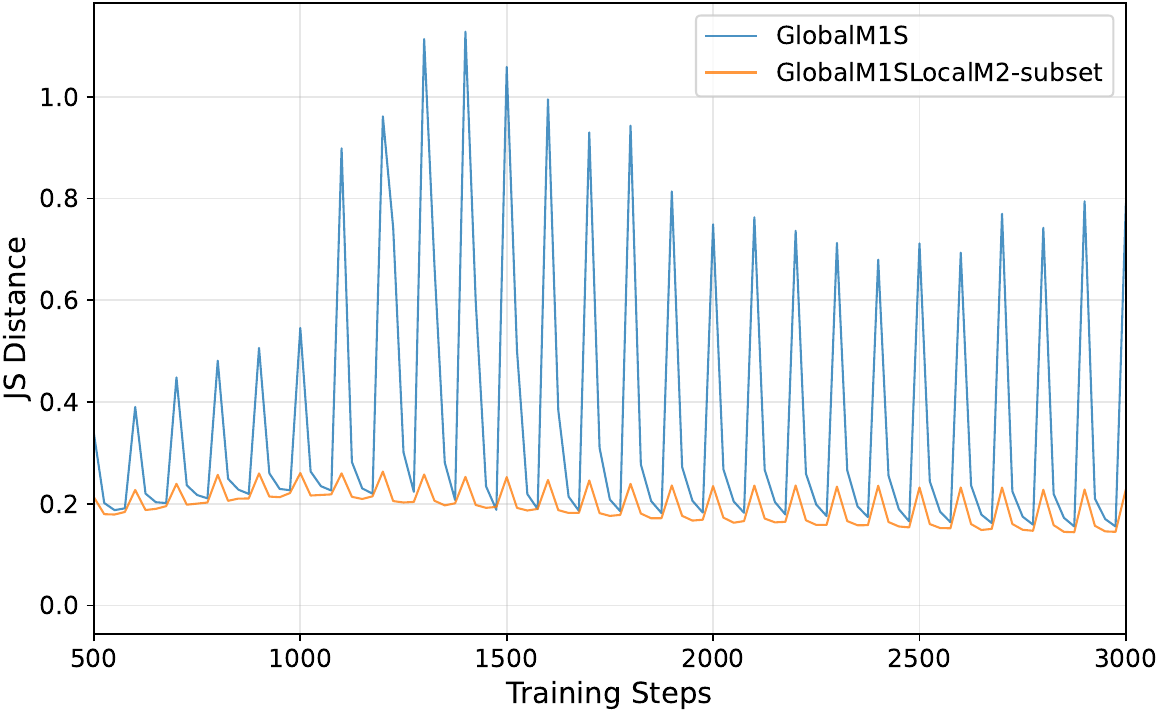}
    \includegraphics[width=0.48\linewidth]{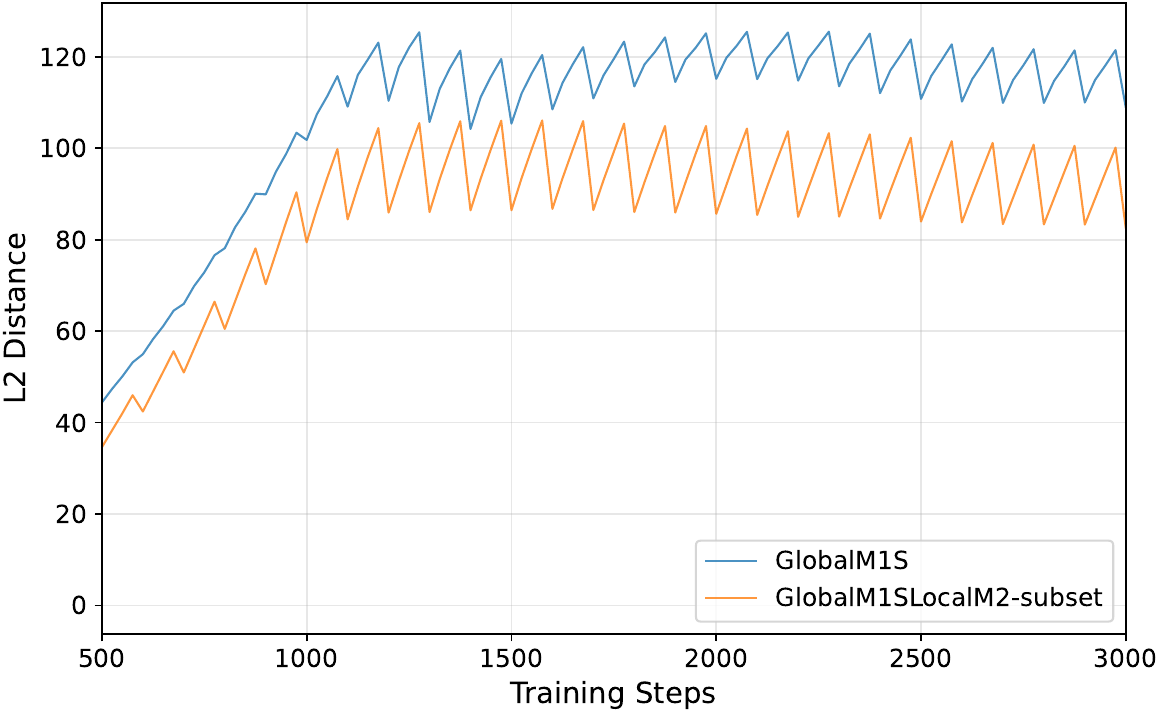}
    \caption{Example of training instability with Global Mix1 and no Mix 2, and then adding Mix2 with only a subset of the parameters to stabilize it. Also note that JS distance shows instability, unlike L2 distance.}
    \label{fig:js-dist-guide}
\end{figure}

In \cref{sec:method:rearranged-diloco} we assumed that the outer optimizer is linear in the gradient and has global parameters. So the question becomes whether the outer optimizer that DiLoCo uses, SGD with Nesterov momentum, is linear in the gradient and has global parameters as per our assumption. The first part holds for the implementation version of SGD with Nesterov momentum, whose update step is implemented in PyTorch~\cite{paszke2019pytorch} as
\begin{align}
  &v_{m,r+1} = \mu v_{m,r} + g_{m,r} \\
  &\quad\text{(Momentum Update)} \nonumber\\
  &x_{m,r+1} = x_r - \gamma \left(g_{m,r} + \mu v_{m,r+1} \right)\\
  &\quad\text{(Optimization Step)} \nonumber\\
  \therefore\ &x_{m,r+1} = x_r - \gamma \left((1+\mu)g_{m,r} + \mu^2 v_{m,r} \right).
\end{align}
However, for the optimizer parameters to be global, we need to add an extra global averaging of the momentum $v_{m,r}$ first. Note that this is still not blocking and can be considered an extra part of Mix1. 
This motivates us to consider the following methods
\begin{tight_itemize}
    \item \textbf{GlobalM1S}: our framework with global averaging for Mix1 that also includes outer optimizer state, and no Mix2, thus no blocking communication.
    \item \textbf{GlobalM1SLocalM2}: our framework with global averaging for Mix1 that also includes outer optimizer state, and pairwise gossip for Mix2.
\end{tight_itemize}

We now give results with this.

\textbf{Synchronizing outer optimizer state.} Unexpectedly, as shown by the results in the second section of \cref{tab:results-10B-tokens}, additionally synchronising the outer optimizer state make our formulation perform slightly worse. In particular, it made the optimization of our non-blocking method (GlobalM1S) much more unstable, with large training loss and validation loss spikes, thus requiring the optimization to recover from it. We did not observe this at lower parameter scales. Incidentally, we found extremely large JS-distance spikes coincide with the loss spikes. Adding blocking communication, whether it be the full outer gradients with GlobalM1SLocalM2 or a subset of them with GlobalM1SLocalM2-subset, drastically reduced the magnitude of any such loss spikes, as well as the JS-distance spikes. This can be seen in the max JS distance, which falls from 1.13 to 0.25/0.26, which is in the range of the main experiments. We also show zoomed in plots of this in \cref{fig:js-dist-guide}.
Thus, these experiments show that keeping track of the JS-distance is a good way to monitor how stable the optimization is, and that to stabilise the optimization blocking communication should be added.

\subsection{Comparison with NoLoCo}
\label{sec:appx:noloco}

We compare against a reimplementation of NoLoCo that includes its full outer step (including dampening) but excludes pipeline randomization. Pipeline-parallel randomization is orthogonal to the DP-only setting we focus on and would conflate this comparison with a different axis, so we exclude it to be consistent with how LocalM1M2 is framed in \cref{sec:method:our-formulation}.

We evaluate two hyperparameter configurations:
\begin{tight_itemize}
    \item \textbf{DiLoCo-style HPs}, matching our other 1.5B runs: outer LR $0.7$, outer momentum $0.9$, $H=100$, inner LR $3{\times}10^{-4}$, with dampening tied to the outer LR per NoLoCo's official code. This run \emph{diverged after approximately 5k steps}.
    \item \textbf{NoLoCo paper HPs}, matching their reported 1.3B setup: outer momentum $0.5$, $H=50$, inner LR $2{\times}10^{-4}$, dampening per their code. This run converged, reaching a validation perplexity of $22.65$ at 10B tokens.
\end{tight_itemize}

The converged result is substantially worse than DiLoCo and all four of our framework variants (\cref{tab:appx:noloco}), \emph{despite} using $H=50$ rather than $H=100$. Note that halving $H$ should reduce the consensus distance and thus improve convergence (at the cost of twice the communication), not degrade it. This suggests pipeline randomization is important for both their performance and stability, though their method still appears unstable.

\begin{table}[h]
\centering
\small
\caption{NoLoCo (10B-token validation perplexity) against DiLoCo and our framework variants. NoLoCo's paper-HP configuration uses $H=50$, which is twice the communication of the $H=100$ baselines.}
\label{tab:appx:noloco}
\begin{tabular}{l c c}
\toprule
\textbf{Method} & $\mathbf{H}$ & \textbf{Val PPL (10B)} \\
\midrule
NoLoCo (paper HPs)        & 50  & 22.65 \\
NoLoCo (DiLoCo HPs)       & 100 & diverged ($\sim$5k steps) \\
\midrule
DiLoCo                    & 100 & 18.65 \\
LocalM1M2                 & 100 & 19.37 \\
GlobalM1                  & 100 & 19.42 \\
GlobalM1LocalM2           & 100 & 18.80 \\
\bottomrule
\end{tabular}
\end{table}

Upon reinspection of their formulation, we note that NoLoCo's outer step is described as ``modified Nesterov momentum'' but lacks the lookahead step of true Nesterov momentum. Prior work on outer optimizers for pseudo-gradient methods \cite{douillard2023diloco,kallusky2025snoostepknesterovouter} shows the lookahead is critical: \citet{kallusky2025snoostepknesterovouter} in particular isolates this effect and reports a substantial gap between Nesterov-with-lookahead and the ``modified Nesterov'' variant. We verified that NoLoCo's official code matches the description in their paper, so we attribute the gap primarily to the missing lookahead rather than to an implementation difference, though we did not confirm this experimentally.

\subsection{Comparison with Streaming DiLoCo's Overlap}
\label{sec:appx:streaming}

Streaming DiLoCo~\cite{douillard2025streaming} has three components: parameter partitioning, computation--communication overlap, and quantization. We reimplement only the computation--communication overlap component for our comparison, since that is the only part that is directly comparable. Both parameter partitioning (with its own communication schedule) and quantization are orthogonal to our framework and could be added on top. Due to minor ambiguities in the published pseudocode and the lack of released code, we corresponded with the Streaming DiLoCo authors to confirm implementation details.

We note that these results are in the appendix rather than the main paper for two reasons. First, the overlap mechanism introduces temporal staleness, placing it in a category of methods we do not compare against in the main paper (see \cref{sec:method:staleness}). Second, the overlap mechanism yields negligible compute-utilization gains over DiLoCo at our target low-bandwidth settings. Streaming DiLoCo recommends a maximum overlap of 5 steps, since performance degrades with longer overlap (their Figs.~8 and 10). This 5-step ceiling is enough to substantially improve compute utilization (CU) over DiLoCo in their high-bandwidth setting, from 85\% to 95\% at 10Gbps, and to 100\% when combined with quantization. In our low-bandwidth setting, however, the same 5-step overlap yields essentially no CU improvement over DiLoCo: 36\%~$\to$~36\% at 100Mbps and 53\%~$\to$~54\% at 200Mbps. By contrast, our methods substantially improve CU at these bandwidths (\cref{fig:compute-util}).

We evaluated two hyperparameter configurations:
\begin{tight_itemize}
    \item \textbf{5-step overlap, DiLoCo-style HPs} matching our other 1.5B runs ($H=100$, DiLoCo outer hyperparameters): large validation-loss spikes during merging, \emph{diverged after $\sim$900 steps}.
    \item \textbf{5-step overlap, Streaming DiLoCo paper HPs} ($H=30$, outer LR $0.4$): smaller spikes, but \emph{still diverged after $\sim$500 steps} (which is many more outer rounds in absolute terms, since $H=30$ means more outer rounds per training step).
\end{tight_itemize}

These experiments are a clear example of how JS distance is more informative than L2 distance, complementing our observations in \cref{fig:distances-main-exp} and \cref{sec:appx:js-diagnostic}. At the point of divergence the L2 consensus distance was small (peak $\approx 22.4$), comparable to the L2 distances of our stable runs (\cref{fig:distances-main-exp}). The JS consensus distance, however, spiked sharply, reaching peaks of approximately $3$. This is an order of magnitude above the $\sim 0.3$--$0.4$ peaks seen on our stable methods.

These results suggest that Streaming DiLoCo's parameter partitioning is crucial for providing robustness to staleness and stabilizing the optimization. Our method appears more stable, as it converges with the original DiLoCo paper's hyperparameters and provides meaningful compute utilization gains in low-bandwidth settings.

\section{Further Discussion}
\textbf{Potential Benefits of Worker Disagreement.}
\citet{kong2021consensuscontroldecentralizeddeep} also show that (1) the consensus distance is especially sensitive for initial phase of training, (2) reducing it significantly lower than the critical consensus distance does not result in better performance gains, and (3) maintaining a large consensus distance in later training phases can be beneficial. Other papers \cite{zhu2023decentralizedsgdaveragedirectionsam} have also shown that having non-zero consensus distance can in fact improve performance over the globally averaging version due to it acting like sharpness aware minimization \cite{foret2021sharpnessawaremin}, where the randomness of the worker parameters around the true average of the workers' parameters ensure that the overall average model converges to flatter minima.

\textbf{Other ways to control consensus.}
As discussed above, we can add blocking communication to improve consensus via Mix2, and we can trade-off between consensus and compute utilization by adjusting both the amount of mixing and the percentage of parameters in Mix2. Another way to flexibly add blocking communication is to introduce sparse weight averaging during the local steps, such as SPARTA~\cite{sparta}. In SPARTA, at each step a sparse random subset of the parameters (e.g., 0.05\% of the parameters) are globally averaged. They show that integrating their method into DiLoCo helps reduce consensus between workers, enabling a larger number of inner steps to be stable. This could easily be integrated into our framework as well.

\textbf{Preferred method per bandwidth.} Combining the perplexity gap (\cref{tab:results-10B-tokens}) with the utilization curves (\cref{fig:compute-util}) gives a simple rule of thumb at $H=100$, $M=8$, summarized in \cref{tab:preferred-regime}: DiLoCo when bandwidth is plentiful, GlobalM1LocalM2 when blocking communication is still affordable, the subset variant in the intermediate range (or other compression variants which we did not investigate), and GlobalM1 when bandwidth is the binding constraint.

\begin{table}[h]
\centering
\small
\caption{Recommended configuration per bandwidth regime at $H=100$, $M=8$, balancing perplexity (\cref{tab:results-10B-tokens}) against compute utilization (\cref{fig:compute-util}). GlobalM1LocalM2-subset is a diagnostic-driven configuration (that represents the possibility of compression integration) rather than a peer of the four main framework variants, see \cref{sec:method:consensus}.}
\label{tab:preferred-regime}
\begin{tabular}{l c}
\toprule
\textbf{Recommended} & \textbf{Bandwidth} \\
\midrule
DiLoCo                      & $\geq 2$\,Gbps \\
GlobalM1LocalM2             & 500\,Mbps -- 2\,Gbps \\
GlobalM1LocalM2-subset      & 100 -- 500\,Mbps \\
GlobalM1                    & $< 100$\,Mbps \\
\bottomrule
\end{tabular}
\end{table}


\section{Consensus and Functional Disagreement Metrics}
\label{sec:appx:consensus-metrics}

In the main paper we track two complementary notions of worker disagreement: an L2-based \emph{parameter disagreement} and a Jensen--Shannon (JS) based \emph{functional disagreement}. Both are computed at a fixed training time (e.g., at a given inner step $h$ of outer round $r$) over the set of $M$ workers.

\subsection{L2 Parameter Disagreement}
\label{sec:appx:l2-distance}

Let $x_{m,r,h}\in\mathbb{R}^d$ denote the parameter vector on worker $m\in\{1,\dots,M\}$ at outer round $r$ and inner step $h$. Define the global average parameter
\begin{align}
\overline{x}_{r,h} \;:=\; \frac{1}{M}\sum_{m=1}^M x_{m,r,h}.
\end{align}
We report an L2 consensus distance using the mean-squared deviation from the global average,
\begin{align}
CD_{L2}(X_{r,h})
\;:=\;
\frac{1}{M}\sum_{m=1}^M \left\|x_{m,r,h}-\overline{x}_{r,h}\right\|_2^2.
\label{eq:l2-consensus-mean}
\end{align}

\subsection{JS Functional Disagreement}
\label{sec:appx:js-distance}

Parameter disagreement does not always reflect whether workers make similar predictions. We therefore measure \emph{functional} disagreement using the (normalized) Jensen--Shannon (JS) distance between each worker's predicted token distribution and that of the \emph{parameter-averaged} model.

Let $x_{m,r,h}\in\mathbb{R}^d$ denote worker $m$'s parameters at outer round $r$ and inner step $h$, and define the parameter average
\begin{align}
\overline{x}_{r,h} \;:=\; \frac{1}{M}\sum_{m=1}^M x_{m,r,h}.
\end{align}
For an input example $u$ (e.g., a token position in a held-out batch), let $z(x,u)\in\mathbb{R}^K$ be the model logits and
\begin{align}
p(x,u) \;:=\; \mathrm{softmax}\!\left(z(x,u)\right)\in\Delta^{K-1}
\end{align}
the corresponding categorical distribution. We define the worker and averaged-model distributions as
\begin{align}
p_{m,r,h}(u) \;:=\; p(x_{m,r,h},u),
\qquad
\overline{p}_{r,h}(u) \;:=\; p(\overline{x}_{r,h},u).
\end{align}

For each example $u$, we compute the JS divergence between two distributions
\begin{align}
\mathrm{JS}\!\left(p_{m,r,h}(u), \overline{p}_{r,h}(u)\right)
\;:=\;
\frac{1}{2}\mathrm{KL}\!\left(p_{m,r,h}(u)\,\middle\|\,\frac{p_{m,r,h}(u)+\overline{p}_{r,h}(u)}{2}\right)
+\frac{1}{2}\mathrm{KL}\!\left(\overline{p}_{r,h}(u)\,\middle\|\,\frac{p_{m,r,h}(u)+\overline{p}_{r,h}(u)}{2}\right),
\label{eq:js-div}
\end{align}
and report the \emph{normalized JS distance}
\begin{align}
d_{\mathrm{JS}}\!\left(p_{m,r,h}(u), \overline{p}_{r,h}(u)\right)
\;:=\;
\sqrt{\frac{\mathrm{JS}\!\left(p_{m,r,h}(u), \overline{p}_{r,h}(u)\right)}{\log 2}}.
\label{eq:js-dist}
\end{align}

Finally, we average over workers and a fixed batch $\mathcal{U}$ to obtain a scalar metric:
\begin{align}
CD_{JS}(X_{r,h})
\;:=\;
\frac{1}{M|\mathcal{U}|}\sum_{m=1}^M\sum_{u\in\mathcal{U}}
d_{\mathrm{JS}}\!\left(p_{m,r,h}(u), \overline{p}_{r,h}(u)\right).
\label{eq:jsd-avg}
\end{align}

Implementation wise, it is important to use float64 and do everything in the log domain. For efficiency we only compute the distance on the 100 tokens with the highest probability in the parameter averaged model, where the top 100 recomputed every time, as otherwise JS distance is computationally and GPU memory intensive.

\subsection{JS-Distance Metric and Subset Selection}
\label{sec:appx:js-diagnostic}

In \cref{sec:method:consensus} we introduced the GlobalM1LocalM2-subset variant as a controlled experiment to test whether the JS-distance metric identifies useful parameter blocks for Mix2. Here we describe how that subset is selected and report a subset-length ablation.

\textbf{Per-block sensitivity analysis.} Because JS distance is a functional metric, it can be probed per parameter block. For each block in the model we measure the average JS-distance increase per worker when \emph{only} that block is averaged during optimization, performed once offline per model architecture (no impact on training compute). \Cref{fig:jsdist-per-param} reports the mean and the max of this sensitivity over the course of training.

\begin{figure}
    \centering
    \includegraphics[width=0.9\linewidth]{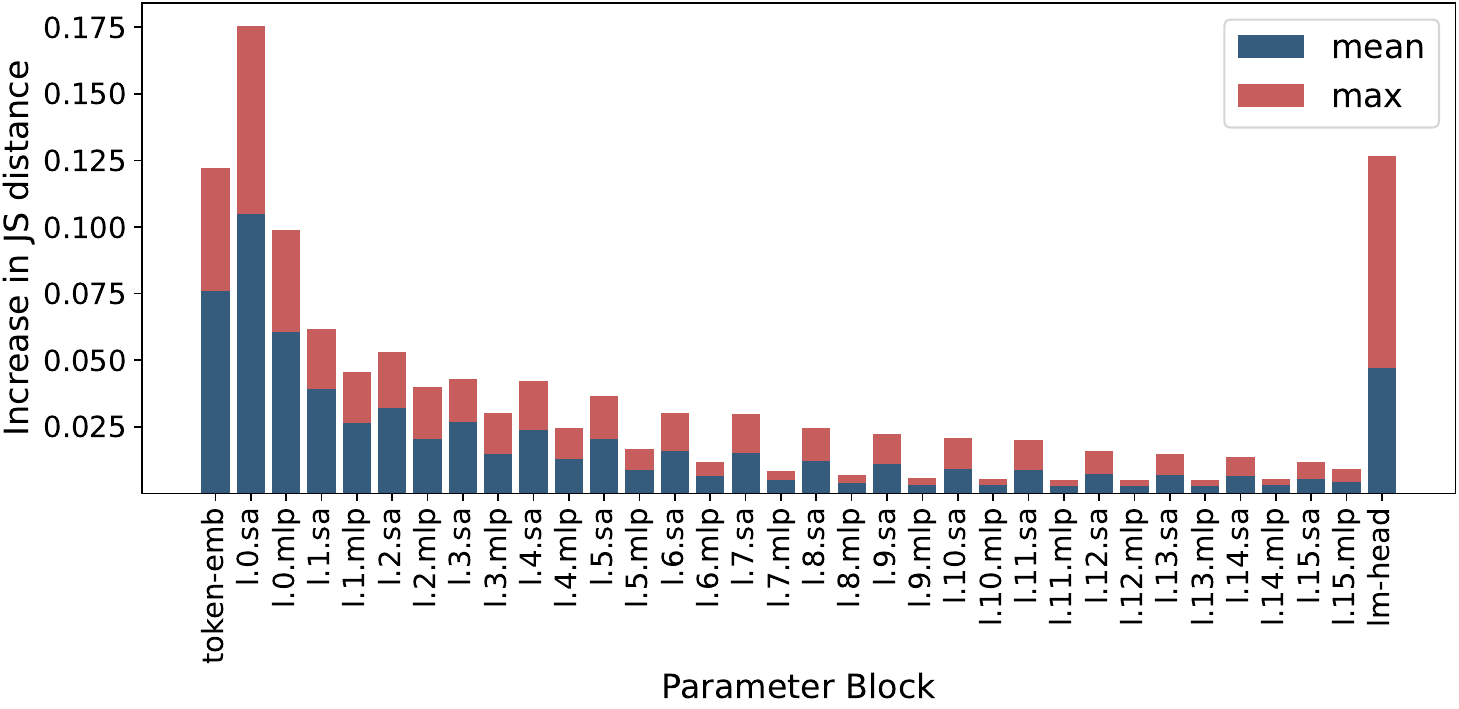}
    \caption{\textbf{Parameter block consensus sensitivity.} JS distance increase for each parameter block when only that block is averaged, taking the mean and the max over the course of training. Token embeddings and the LM head dominate, followed by self-attention and then MLP parameters, with the latter two decreasing with depth.}
    \label{fig:jsdist-per-param}
\end{figure}

Token embeddings and the LM head dominate, followed by self-attention parameters and then MLP parameters, with the latter two decreasing with depth. This naturally suggests a subset comprised of token embeddings, the LM-Head, and the first $k$ transformer layers.

\textbf{Choosing $k$.} Since the goal is to suppress peak JS-distance spikes, we sweep $k$ and measure the peak JS distance through training, with results in \cref{tab:appx:subset-length}. We use $k=2$ (43\% of parameters for our 1.5B model), as that is sufficient to bring peak JS distance close to GlobalM1LocalM2 levels.

\begin{table}[h]
\centering
\small
\caption{Subset-length ablation. Peak JS distance through training as the subset is grown to include progressively more transformer layers.}
\label{tab:appx:subset-length}
\begin{tabular}{l c}
\toprule
\textbf{Subset} & \textbf{Peak JS distance} \\
\midrule
Token Emb + LM-Head           & 0.28 \\
\quad + Layer 0               & 0.26 \\
\quad + Layer 1               & 0.22 \\
\bottomrule
\end{tabular}
\end{table}

The subset-length sweep, the per-block sensitivity (\cref{fig:jsdist-per-param}), and the configuration's main-paper results (\cref{tab:results-10B-tokens} and \cref{fig:loss-curves,fig:compute-util,fig:distances-main-exp}) together suggest that the JS-distance metric is useful in determining where Mix2 communication is actually needed.

\section{Consensus Error Result and Proof}
\label{sec:appx:consensus}

We structure our proof based on \citet{khaled2025understandingouteroptimizers}.

Let us rewrite our formulation with SGD:

\begin{align}
  \vspace{-1mm}
  &x_{m,r} = \text{Mix1}_r\left( z_{m,r} \right)\\
  &y_{m, r, 0} = z_{m,r} \\
  &\text{for } \ h=0,1, \ldots, H-1 \text { in sequence: } \nonumber\\
    &\quad g_{m, r, h} = \nabla_{y_{m, r, h}} \Ell(y_{m, r, h}, b_{m,r,h})\\
    &\quad y_{m, r, h+1} = y_{m, r, h} - \eta g_{m, r, h}\\
  &\Delta_{m,r} = y_{m, r, H} - y_{m, r, 0} = -\eta \sum_{h=0}^{H-1} g_{m, r, h}\\
  &\widehat{\Delta_{m,r}} = \text{Mix2}_r \left(\Delta_{m,r}\right) \\
  &z_{m,r+1} = x_{m,r} + \widehat{\Delta}_{m,r}
\end{align}

Following \citet{khaled2025understandingouteroptimizers}, we define
\begin{align}
\vspace{-1mm}
 \begin{split}
  y_{r, h} \eqdef \frac{1}{M} \sum_{m=1}^M y_{m, r, h}, \qquad\qquad g_{r, h} \eqdef \frac{1}{M} \sum_{m=1}^M g_{m, r, h} \\
  \overline{g}_{m, r, h} \eqdef \ec[r,h-1]{g_{m, r, h}} = \nabla f(y_{m, r, h}),
  \qquad\qquad \overline{g}_{r, h} \eqdef \ec[r,h-1]{g_{r, h}}
 \end{split}
\end{align}
and noise $n_{m,r,h} = g_{m,r,h} - \overline{g}_{m,r,h}$.

We also define $\mathcal{F}_{r, h}$ as the $\sigma$-algebra generated by all the stochastic gradients up to the start of (but not including) step $h$ in round $r$, and will often take the conditional expectation on $\mathcal{F}_{r, h}$, i.e., $\eci[r, h]{\cdot}$.

Assume $\rho_1,\rho_2$ are such that
\begin{align}
    \|\text{Mix1}(X)-\overline{X}\|_F^2 \leq \rho_1 \|X-\overline{X}\|_F^2\\
    \|\text{Mix2}(X)-\overline{X}\|_F^2 \leq \rho_2 \|X-\overline{X}\|_F^2.
\end{align}

We have the following assumptions
\begin{description}
    \item[Assumption 1] The function $f$ is differentiable, convex, has $L$-Lipschitz gradients, and has a minimizer $x_{\ast}$.
    \item[Assumption 2] Given a point $x \in \mathbb{R}^d$, the stochastic gradients $g(x) \in \mathbb{R}^d$ are (a) unbiased in expectation $\ec{g(x)} = \nabla f(x)$, and (b) has variance bounded as $\ecn{g(x) - \nabla f(x)} \leq \sigma^2$, where $\ec{\cdot}$ denotes the expectation operator.
    \item[Assumption 3] Minibatches (and hence stochastic gradients) are sampled i.i.d. across workers and steps. 
    \item[Assumption 4] There exists $\mu_r$ (measurable w.r.t. $\mathcal{F}_{r,0}$) such that $\eci[r,0]{\Delta_{m,r}} = \mu_r$  for all $m$. Equivalently, $\eci[r,0]{\Delta_{m,r}-\Delta_{s,r}}=0$ for any $m,s\in\{1,...,M\}$.
\end{description}

The first two (and implicitly the third) are used in \citet{khaled2025understandingouteroptimizers}. We further add Assumption 4, which essentially says that even though we have different starting points at the start of our outer step, they are following a common mean direction. This comes for free in DiLoCo due to the global synchronization. This assumption can also be thought of as a stabilizing assumption on the optimization.

We also restate the following Lemma from \citet{khaled2025understandingouteroptimizers} (proof is given in that paper):

\begin{lemma}\label{lem:one-step-decrease}
Let $f$ be a convex and $L$-smooth function. Suppose that $\eta \le \frac{2}{L}$, and let
\[
T_{\eta}(x) = x - \eta \nabla f(x).
\]
Then,
\[
\|T_{\eta}(x) - T_{\eta}(y)\|^{2} \le \|x - y\|^{2}.
\]
\end{lemma}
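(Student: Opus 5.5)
The statement is the non-expansiveness of the gradient step map $T_\eta$ for a convex $L$-smooth $f$ with $\eta\le 2/L$. The plan is to expand the square and control the cross term using co-coercivity of the gradient.

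First, write $T_\eta(x)-T_\eta(y) = (x-y) - \eta\,(\nabla f(x)-\nabla f(y))$ and expand:
\[
\|T_\eta(x)-T_\eta(y)\|^2 = \|x-y\|^2 - 2\eta\,\langle \nabla f(x)-\nabla f(y),\, x-y\rangle + \eta^2\|\nabla f(x)-\nabla f(y)\|^2 .
\]

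Second, invoke the co-coercivity (Baillon--Haddad) inequality, which holds for any convex function with $L$-Lipschitz gradient:
\[
\langle \nabla f(x)-\nabla f(y),\, x-y\rangle \ \ge\ \tfrac{1}{L}\|\nabla f(x)-\nabla f(y)\|^2 .
\]
Substituting this gives
\[
\|T_\eta(x)-T_\eta(y)\|^2 \le \|x-y\|^2 - \eta\left(\tfrac{2}{L}-\eta\right)\|\nabla f(x)-\nabla f(y)\|^2 .
\]
The last term is nonpositive precisely when $\eta\le 2/L$, which yields the claim.

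The only nontrivial ingredient is co-coercivity. If it cannot simply be cited as a standard fact, I would derive it as follows.
\begin{itemize}
\item Fix $x$ and define $\phi(z)=f(z)-\langle\nabla f(x),z\rangle$. This function is convex and $L$-smooth, and it is minimized at $z=x$ because its gradient vanishes there.
\item Apply the descent lemma, $\phi(z - \tfrac1L\nabla\phi(z)) \le \phi(z)-\tfrac{1}{2L}\|\nabla\phi(z)\|^2$, at $z=y$. Since $\phi(x)$ is the minimum, this gives $f(x)-f(y)-\langle\nabla f(x),x-y\rangle \le -\tfrac{1}{2L}\|\nabla f(y)-\nabla f(x)\|^2$.
\item Swap the roles of $x$ and $y$ to get the symmetric inequality.
\item Add the two inequalities; the function values cancel and co-coercivity remains.
\end{itemize}
This derivation is the main obstacle, though it is routine. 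The rest is algebra.
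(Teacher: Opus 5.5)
Your proof is correct. You expand the square and bound the cross term by co-coercivity, $\langle \nabla f(x)-\nabla f(y), x-y\rangle \ge \frac{1}{L}\|\nabla f(x)-\nabla f(y)\|^2$, which leaves slack $\eta(\frac{2}{L}-\eta)\|\nabla f(x)-\nabla f(y)\|^2 \ge 0$; this uses the implicit $\eta \ge 0$ as well as $\eta \le \frac{2}{L}$. Your derivation of co-coercivity from the descent lemma applied to $f-\langle \nabla f(x),\cdot\rangle$ is also sound. The paper gives no proof of its own and defers to Khaled et al., where this standard co-coercivity argument is essentially the route being cited.
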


We now state our lemma.
\begin{lemma}\label{lem:Vr-bound} Suppose Assumptions 1--4 hold and $\eta \leq \frac{1}{L}$. Then for $\rho_1 \in [0,1)$ and all $r,h$,
\begin{align*}
\ec{\frac{1}{M^2} \sum_{m, s = 1}^M \sqn{y_{m, r, h} - y_{s, r, h}}} \leq 2 \eta^2 \sigma^2 H \left( 1 + \frac{\rho_2}{1-\rho_1}\right).
\end{align*}
The boundary case $\rho_1=1$ (vanilla DiLoCo) is treated in \cref{rem:rho1-boundary} below.
\end{lemma}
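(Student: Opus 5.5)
Throughout, work with the pairwise disagreement $V_{r,h}\eqdef \frac{1}{M^2}\sum_{m,s=1}^M \sqn{y_{m,r,h}-y_{s,r,h}}$. By the identity $\frac{1}{M^2}\sum_{m,s}\sqn{a_m-a_s}=\frac{2}{M}\sum_m\sqn{a_m-\overline{a}}=\frac{2}{M}\|A-\overline{A}\|_F^2$, $V$ is just a rescaled Frobenius distance to the mean, so the contraction assumptions on Mix1 and Mix2 apply to it directly. The plan has three steps. First, bound the drift inside one outer round. Second, bound the disagreement at the start of a round through a one-step recursion across rounds. Third, unroll that recursion.

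\textbf{Step 1 (within a round).} Write $y_{m,r,h+1}-y_{s,r,h+1} = \bigl(T_\eta(y_{m,r,h})-T_\eta(y_{s,r,h})\bigr) - \eta(n_{m,r,h}-n_{s,r,h})$. Conditioning on $\mathcal{F}_{r,h}$, the cross term vanishes because the noise has zero mean (Assumption 2a). For $m\neq s$ the two noises are independent (Assumption 3), so their variances add and $\eci[r,h]{\sqn{n_{m,r,h}-n_{s,r,h}}}\le 2\sigma^2$. Lemma~\ref{lem:one-step-decrease} (valid since $\eta\le 1/L\le 2/L$) makes $T_\eta$ nonexpansive. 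Averaging over all pairs (the $m=s$ terms are zero) gives $\ec{V_{r,h+1}}\le \ec{V_{r,h}}+2\eta^2\sigma^2$. Unrolling, $\ec{V_{r,h}}\le \ec{V_{r,0}}+2\eta^2\sigma^2 h \le \ec{V_{r,0}}+2\eta^2\sigma^2 H$. The same argument also gives the exact-increment form $\ec{V_{r,H}}-\ec{V_{r,0}}\le 2\eta^2\sigma^2H$, which Step 2 needs.

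\textbf{Step 2 (across rounds).} We have $V_{r+1,0}=\frac{2}{M}\|Z_{r+1}-\overline{Z}_{r+1}\|_F^2$, where
\[
Z_{r+1}-\overline{Z}_{r+1}=\bigl(\text{Mix1}(Z_r)-\overline{Z}_r\bigr)+\bigl(\text{Mix2}(D_r)-\overline{D}_r\bigr),
\]
with $D_r$ the stacked $\Delta_{m,r}$. This uses the fact that doubly stochastic mixing preserves means. The first summand is $\mathcal{F}_{r,0}$-measurable, given that the mixing matrices are drawn independently of the gradient noise. The second summand has conditional mean $\bigl(\eci[r,0]{D_r}-\overline{\eci[r,0]{D_r}}\bigr)W_2$, which vanishes by Assumption~4, since all columns of $\eci[r,0]{D_r}$ equal $\mu_r$. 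So the cross term drops and
\[
\ec{V_{r+1,0}}\le \rho_1\ec{V_{r,0}}+\rho_2\,\ec{\tfrac{2}{M}\|D_r-\overline{D}_r\|_F^2}.
\]
The key remaining point is to bound the outer-gradient disagreement. Write $D_r-\overline{D}_r=(Y_{r,H}-\overline{Y}_{r,H})-(Z_r-\overline{Z}_r)$. By Assumption~4 again, $\ec{\langle D_r-\overline{D}_r,\,Z_r-\overline{Z}_r\rangle_F}=0$, so the Pythagorean identity gives $\ec{\frac{2}{M}\|D_r-\overline{D}_r\|_F^2}=\ec{V_{r,H}}-\ec{V_{r,0}}\le 2\eta^2\sigma^2H$ by Step 1.

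\textbf{Step 3 (unrolling).} The recursion is now $\ec{V_{r+1,0}}\le\rho_1\ec{V_{r,0}}+2\eta^2\sigma^2H\rho_2$ with $V_{0,0}=0$, since all workers start at $x_0$. For $\rho_1<1$, summing the geometric series gives $\ec{V_{r,0}}\le 2\eta^2\sigma^2H\,\frac{\rho_2}{1-\rho_1}$ for every $r$. Substituting into Step 1 yields the claimed bound $2\eta^2\sigma^2H\bigl(1+\frac{\rho_2}{1-\rho_1}\bigr)$.

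I expect the main obstacle to be Step 2: justifying that the cross terms vanish, which rests entirely on Assumption~4 and on the independence of the mixing randomness from the gradients. If the contraction only holds in expectation over $W$, as in \cref{eq:contraction-factor}, one first conditions on $\mathcal{F}_{r,0}$ and on the noise, and then takes the expectation over $W$.
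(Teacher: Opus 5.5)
Your proof is correct. Steps 1 and 3, and the cancellation of the cross term between the Mix1 and Mix2 parts in Step 2, follow the paper essentially line for line; the paper does that cancellation pairwise via linearity of Mix2 rather than in matrix form.

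You genuinely differ from the paper in how you bound the outer-gradient disagreement $D_r$.

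\textbf{The paper's route.} It writes $\Delta_{m,r}-\Delta_{s,r}=\varepsilon_{m,r}-\varepsilon_{s,r}$, with $\varepsilon_{m,r}$ the fluctuation of $\Delta_{m,r}$ around its $\mathcal{F}_{r,0}$-conditional mean. It then identifies $\varepsilon_{m,r}$ with the accumulated noise $-\eta\sum_{h}\xi_{m,r,h}$. Martingale orthogonality and cross-worker independence then give $2\eta^2\sigma^2H$ from the variance bound alone, without using nonexpansiveness of $T_\eta$.

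\textbf{Your route.} You use $Y_{r,H}=Z_r+D_r$ and kill the cross term with Assumption 4. This gives $\mathbb{E}[\tfrac{2}{M}\|D_r-\overline{D}_r\|_F^2]=\mathbb{E}[V_{r,H}]-\mathbb{E}[V_{r,0}]$, which Step 1 already bounds.

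\textbf{Comparison.} Your route is shorter, reuses Step 1, and is the more robust of the two. The paper's identification of $\varepsilon_{m,r}$ with pure noise drops the term $-\eta\sum_{h\ge 1}\bigl(\nabla f(y_{m,r,h})-\mathbb{E}[\nabla f(y_{m,r,h})\mid\mathcal{F}_{r,0}]\bigr)$. That term is nonzero in general, because $y_{m,r,h}$ is random given $\mathcal{F}_{r,0}$ once $h\ge 1$. Controlling it needs exactly the nonexpansiveness of $T_\eta$, which your argument inherits automatically from Step 1. The cost is small: you get only the pair-averaged, unconditional bound on $D_r$ rather than a per-pair conditional one, and that is all the recursion needs.
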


\begin{proof}

We first show that 
\begin{align}
    \ec{\mathcal{V}_{r, h}} \leq \ec{\mathcal{V}_{r, 0}} + 2 \eta^2 \sigma^2 h.
\end{align}

Let $\tilde{T}_{\eta}(y_{m, r, h}) = y_{m, r, h} - \eta g_{m,  r, h}$ where $g_{m, r, h}$ is the stochastic gradient, and $T_{\eta}(y_{m, r, h}) = y_{m, r, h} - \eta \overline{g}_{m, r, h}$ is the corresponding expected gradient update. Their difference is the noise term $\xi_{m, r,h} = \tilde{T}_{\eta}(y_{m, r, h}) - T_{\eta}(y_{m, r, h}) = -\eta n_{m,r,h}$. Thus
\begin{align*}
y_{m, r, h+1} - y_{s, r, h+1} = \tilde{T}_{\eta}(y_{m, r, h}) - \tilde{T}_{\eta}(y_{s, r, h})
= T_{\eta}(y_{m, r, h}) - T_{\eta}(y_{s, r, h}) + \left[ \xi_{m, r, h} - \xi_{s, r, h} \right].
\end{align*}
Define $\mathcal{V}_{r,h} = \frac{1}{M^2} \sum_{m, s=1}^M \sqn{y_{m, r, h} - y_{s, r, h}}$. It follows that
\begin{align*}
\mathcal{V}_{r,h+1} &= \frac{1}{M^2} \sum_{m, s=1}^M \sqn{y_{m, r, h+1} - y_{s, r, h+1}} \\
\begin{split}
&= \frac{1}{M^2} \sum_{m, s=1}^M \Bigg [ \sqn{T_{\eta}(y_{m, r, h}) - T_{\eta}(y_{s, r, h})} + \sqn{\xi_{m, r,h} - \xi_{s, r,h}} \\
&\qquad + 2 \ev{T_{\eta}(y_{m, r, h}) - T_{\eta}(y_{s, r, h}), \xi_{m, r,h} - \xi_{s, r,h}} \Bigg].
\end{split}
\end{align*}

Taking the conditional expectation on $\mathcal{F}_{r, h}$ gives
\begin{align*}
\begin{split}
\eci[r, h]{\mathcal{V}_{r, h+1}} &= \frac{1}{M^2} \sum_{m, s=1}^M \Bigg [ \sqn{T_{\eta}(y_{m, r, h}) - T_{\eta}(y_{s, r, h})} + \ecin[r,h]{\xi_{m, r,h} - \xi_{s, r,h}} \Bigg]
\end{split}
\end{align*}
where we drop $\eci[r,h]{\ev{T_{\eta}(y_{m, r, h}) - T_{\eta}(y_{s, r, h}), \xi_{m, r,h} - \xi_{s, r,h}}}$ because the left part is fixed given the conditioning $\sigma$-algebra and since the stochastic gradients are unbiased $\eci[r, h]{\xi_{m, r,h}} = \eci[r, h]{\xi_{s, r,h}} = 0$. 

Finally, using the fact that $\sqn{T_{\eta}(x) - T_{\eta}(y)} \leq \sqn{x-y}$ whenever $\eta \leq \frac{2}{L}$ from \cref{lem:one-step-decrease}, as well as Assumption 2 and Assumption 3, we get
\begin{align*}
  \ec[r, h]{\mathcal{V}_{r, h+1}} &\leq \frac{1}{M^2} \sum_{m,s=1}^M \left[ \sqn{y_{m, r, h} - y_{s, r, h}} + 2 \eta^2 \sigma^2 \right] \\
  &= \mathcal{V}_{r, h} + 2 \eta^2 \sigma^2
\end{align*}

Thus, taking the unconditional expectation and recursing from $h=0$ we get
\begin{align}
    \ec{\mathcal{V}_{r, h}} \leq \ec{\mathcal{V}_{r, 0}} + 2 \eta^2 \sigma^2 h.
\end{align}

We now need to determine $\ec{\mathcal{V}_{r, 0}}$. Note that

\begin{align}
    \sqn{z_{m, r+1} - z_{s, r+1}}
        &= \sqn{\left(x_{m, r} - x_{s, r}\right) + \left(\widehat{\Delta_{m,r}}-\widehat{\Delta_{s,r}}\right)}\\
        &= \sqn{x_{m, r} - x_{s, r}} + \sqn{\widehat{\Delta_{m,r}}-\widehat{\Delta_{s,r}}} + 2\left\langle x_{m, r} - x_{s, r}, \widehat{\Delta_{m,r}}-\widehat{\Delta_{s,r}}\right\rangle.
\end{align}
Conditioning on $\mathcal{F}_{r,0}$ and using Assumption 4 and linearity of Mix2
\begin{align}
    \eci[r,0]{\widehat{\Delta_{m,r}}-\widehat{\Delta_{s,r}}}
        &= \eci[r,0]{\text{Mix2}\left({\Delta}_{m,r}-{\Delta}_{s,r}\right)}\\
        &= \text{Mix2}\left(\eci[r,0]{{\Delta}_{m,r}-{\Delta}_{s,r}}\right)\\
        &= 0.
\end{align}

Thus as $x_{m, r} - x_{s, r}$ is $\mathcal{F}_{r,0}$ measurable, 
\begin{align}
    \eci[r,0]{ \left\langle x_{m, r} - x_{s, r}, \widehat{\Delta_{m,r}}-\widehat{\Delta_{s,r}}\right\rangle }
        = \left\langle x_{m, r} - x_{s, r}, \eci[r,0]{ \widehat{\Delta_{m,r}}-\widehat{\Delta_{s,r}} }\right\rangle
        = 0.
\end{align}

Therefore
\begin{align}
    \eci[r,0]{ \sqn{z_{m, r+1} - z_{s, r+1}} } =\sqn{x_{m, r} - x_{s, r}} + \eci[r,0]{ \sqn{\widehat{\Delta_{m,r}}-\widehat{\Delta_{s,r}}} }.
\end{align}

Now
\begin{align}
    \ec{\mathcal{V}_{r+1, 0}}
        &= \ec{ \frac{1}{M^2} \sum_{m,s=1}^M \left[ \sqn{x_{m, r} - x_{s, r}} \right] } + \ec{ \frac{1}{M^2} \sum_{m,s=1}^M \left[ \sqn{\widehat{\Delta_{m,r}}-\widehat{\Delta_{s,r}}} \right] }\\
        &\leq  \rho_1 \ec{ \frac{1}{M^2} \sum_{m,s=1}^M \left[ \sqn{z_{m, r} - z_{s, r}} \right] } + \rho_2 \ec{ \frac{1}{M^2} \sum_{m,s=1}^M \left[ \sqn{\Delta_{m,r}-\Delta_{s,r}} \right] }\\
        &\leq  \rho_1 \ec{\mathcal{V}_{r, 0}} + \rho_2 D_r\\
\end{align}
where $D_r := \ec{ \frac{1}{M^2} \sum_{m,s=1}^M \left[ \sqn{\Delta_{m,r}-\Delta_{s,r}} \right] }$.

We now show that $D_r \le 2\eta^2\sigma^2 H$.

We define the gradient noise, and note its properties from Assumption 2
\begin{align}
g_{m,r,h} = \nabla f(y_{m,r,h}) + \xi_{m,r,h},
\quad \mathbb{E}[\xi_{m,r,h}\mid y_{m,r,h}]=0,\quad
\mathbb{E}[\|\xi_{m,r,h}\|^2\mid y_{m,r,h}]\le \sigma^2.
\end{align}
Then
\begin{align}
\Delta_{m,r} = -\eta\sum_{h=0}^{H-1}\nabla f(y_{m,r,h})-\eta\sum_{h=0}^{H-1}\xi_{m,r,h}.
\end{align}

Define
\begin{align}
\varepsilon_{m,r}:=\Delta_{m,r}-\mathbb{E}[\Delta_{m,r}\mid \mathcal{F}_{r,0}],
\end{align}
so $\mathbb{E}[\varepsilon_{m,r}\mid\mathcal{F}_{r,0}]=0$, 
$\Delta_{m,r}-\Delta_{s,r} = \varepsilon_{m,r}-\varepsilon_{s,r}$ and
$\varepsilon_{m,r} = -\eta\sum_{h=0}^{H-1}\xi_{m,r,h}$.

Conditioning on $\mathcal{F}_{r,0}$ 
\begin{align}
\mathbb{E}\big[|\varepsilon_{m,r}|^2\mid \mathcal{F}_{r,0}\big]
    = \eta^2 \sum_{h=0}^{H-1}\mathbb{E}\big[|\xi_{m,r,h}|^2\mid \mathcal{F}_{r,0}\big]
    \le \eta^2 H\sigma^2.
\end{align}
so by independence across workers for $m\neq s$
\begin{align}
    \mathbb{E}\big[\langle \varepsilon_{m,r},\varepsilon_{s,r}\rangle\mid\mathcal{F}_{r,0}\big]
        =0
\end{align}
and
\begin{align}
    \mathbb{E}\left[\|\Delta_{m,r}-\Delta_{s,r}\|^2\mid\mathcal{F}_{r,0}\right]
        = \mathbb{E}\left[\|\varepsilon_{m,r}-\varepsilon_{s,r}\|^2 \mid \mathcal{F}_{r,0}\right]
        = \mathbb{E}\left[\|\varepsilon_{m,r}\|^2\right]+\mathbb{E}\left[\|\varepsilon_{s,r}\|^2\right]
        \le 2\eta^2H\sigma^2.
\end{align}
Averaging over $m$ and $s$ gives
\begin{align}
     D_r =  \mathbb{E}\left[\frac{1}{M^2}\sum_{m,s}|\Delta_{m,r}-\Delta_{s,r}|^2\right]
        \le 2\eta^2\sigma^2 H.
\end{align}

We now have the recursion
\begin{align}
\mathbb{E}[V_{r+1,0}] \leq \rho_1\mathbb{E}[V_{r,0}] + 2\rho_2\eta^2\sigma^2 H,
\end{align}
and since $V_{0,0}=0$, for $\rho_1 < 1$ the geometric series converges and
\begin{align}
    \mathbb{E}[V_{r,0}] \leq \frac{2\rho_2\eta^2\sigma^2 H}{1-\rho_1}.
\end{align}

Thus for $\rho_1 < 1$,
\begin{align}
    \mathbb{E}[V_{r,h}]
        &\leq \frac{2\rho_2\eta^2\sigma^2 H}{1-\rho_1} + 2\eta^2\sigma^2 h\\
        &\leq \frac{2\rho_2\eta^2\sigma^2 H}{1-\rho_1} + 2\eta^2\sigma^2 H\\
        &= 2\eta^2\sigma^2 H \left( 1 + \frac{\rho_2}{1-\rho_1} \right).
\end{align}

\end{proof}

\begin{remark}[Boundary case $\rho_1=1$]\label{rem:rho1-boundary}
The bound assumes $\rho_1<1$ so the geometric series converges. For vanilla DiLoCo ($\rho_1=1$, $\rho_2=0$), the recursion becomes $\mathbb{E}[V_{r+1,0}] \leq \mathbb{E}[V_{r,0}]$ with $\mathbb{E}[V_{0,0}]=0$, so $\mathbb{E}[V_{r,0}]=0$ for all $r$, and $\mathbb{E}[V_{r,h}] \leq 2\eta^2\sigma^2 H$, recovering the result of \citet{khaled2025understandingouteroptimizers}. For $\rho_1=1$ with $\rho_2>0$ (no Mix1 with partial Mix2), the recursion grows linearly in $r$ and no uniform-in-$r$ bound exists, consistent with the consensus error being able to diverge in this regime.
\end{remark}

\begin{remark}[Convexity]
Assumption 1 requires $f$ to be convex, but convexity enters our argument only through the one-step contraction $\sqn{T_\eta(x)-T_\eta(y)}\leq\sqn{x-y}$ in \cref{lem:one-step-decrease}. Extending the bound to the non-convex case (e.g., under a Polyak--{\L}ojasiewicz condition) is left to future work.
\end{remark}

\end{document}